\documentclass[letterpaper, 10pt, journal, twoside]{IEEEtran}

\usepackage[utf8]{inputenc}
\usepackage{amsmath}
\usepackage{amssymb}
\usepackage[normalem]{ulem}

\usepackage{amsthm}
\usepackage{graphicx}
\usepackage{support-caption}
\usepackage{caption}
\usepackage{subcaption}
\usepackage{cite}
\usepackage{authblk}

\usepackage{cancel}
\usepackage{todonotes}



\newtheorem{thm}{Theorem}[section]

\newtheorem{cor}{Corollary}

\theoremstyle{definition}

\newcommand{\bsrevA}[1]{{\color{black}{#1}}}
\newcommand{\bsrevB}[1]{{\color{black}{#1}}}
\newcommand{\bsrevC}[1]{{\color{black}{#1}}}

\newcommand{\xxnote}[3]{}
\ifx\hidenotes\undefined
  \usepackage{color}
  \renewcommand{\xxnote}[3]{\color{#2}{#1: #3}}
\fi

\usepackage{algorithm}
\usepackage[noend]{algpseudocode}


\newcommand{\cspace}{\mathcal{C}}
\newcommand{\free}{{free}}
\newcommand{\obs}{{obs}}
\newcommand{\path}{\xi}
\newcommand{\unpath}{\hat{\path}} 
\newcommand{\pathsg}{\path_{s \rightarrow g}}

\newcommand{\reals}{\mathbb{R}}

\newcommand{\configuration}{q}
\newcommand{\config}{\configuration}

\newcommand{\union}{\cup}
\newcommand{\clearance}{\delta}
\newcommand{\inflation}{\epsilon}

\newcommand{\graph}{\mathcal{G}}
\newcommand{\ungraph}{\mathcal{\hat{G}}} 
\newcommand{\vertices}{\mathcal{V}}
\newcommand{\edges}{\mathcal{E}}
\newcommand{\vertex}{v}
\newcommand{\node}{\vertex}

\newcommand{\edge}{e}

\newcommand{\layer}{L}
\newcommand{\depth}{d}
\newcommand{\depthmax}{D}

\newcommand{\executioncost}{c_x}
\newcommand{\ec}{\executioncost}
\newcommand{\planningcost}{c_p}
\newcommand{\pc}{\planningcost}

\newcommand{\admissibleHeuristic}{\hat{h}_x}
\newcommand{\ah}{\admissibleHeuristic}
\newcommand{\SDHeuristic}{\hat{h}_{SD}}
\newcommand{\sdh}{\SDHeuristic}
\newcommand{\timeHeuristic}{\hat{h}_p}
\newcommand{\timeh}{\timeHeuristic}

\newcommand{\ftrue}{f^*}
\newcommand{\fest}{\hat{f}}
\newcommand{\gest}{\hat{g}}
\newcommand{\gtrue}{g^*}
\newcommand{\htrue}{h^*}
\newcommand{\successor}{\node^\prime}
\newcommand{\pathLi}{\path_i} 

\newcommand{\sdw}{w_t}

\newcommand{\start}{s}
\newcommand{\goal}{g}

\newcommand{\badnode}{\node_{z}}
\newcommand{\goodnode}{\node_{m}}
\newcommand{\goodnodepredindex}{m-1}
\newcommand{\goodnodepred}{\node_{\goodnodepredindex}}

\newcommand{\astar}{\textsc{A\textsuperscript{*}}}
\newcommand{\astarns}{\textsc{A\textsuperscript{*}}}

\newcommand{\LazySP}[0]{\textsc{LazySP }}

\newcommand{\SPSB}[2]{\rlap{\textsuperscript{#1}}\textsubscript{#2}}
\newcommand{\graphsearch}{\textsc{A\textsuperscript{*} }}
\newcommand{\astareps}{\textsc{A\SPSB{*}{$\epsilon$}}}

\newcommand{\eref}[1]{(\ref{#1})}

\newcommand{\figref}[1]{Fig.~\ref{#1}}
\newcommand{\algoref}[1]{Algorithm~\ref{#1}}
\newcommand{\algolineref}[1]{Line~\ref{#1}}
\newcommand{\thmref}[1]{Theorem~\ref{#1}}

\newcommand{\tabref}[1]{Table~\ref{#1}}

\author{Brad Saund$^1$ and Dmitry Berenson$^1$
  \thanks{Manuscript received: September, 9, 2019; Revised December, 4, 2019;
    Accepted January, 19, 2020.}
  \thanks{This paper was recommended for publication by Editor Nancy Amato upon
    evaluation of the Associate Editor and Reviewers' comments.
    This work was supported in part by NSF under grant IIS-1750489 and by Toyota Research Institute (TRI). This article solely reflects the opinions of its authors and not TRI or any other Toyota entity.}
  \thanks{$^{1}$Authors are with the Robotics Department, University of Michigan, Ann Arbor, MI, USA.
    {\tt\small \{bsaund, dmitryb\}@umich.edu}}
  \thanks{© 2020 IEEE.  Personal use of this material is permitted.  Permission from IEEE must be obtained for all other uses, in any current or future media, including reprinting/republishing this material for advertising or promotional purposes, creating new collective works, for resale or redistribution to servers or lists, or reuse of any copyrighted component of this work in other works.}

}

\title{Fast Planning Over Roadmaps via \\Selective Densification}

\date{November 2019}

\markboth{IEEE Robotics and Automation Letters. Preprint Version. Accepted
January, 2020}
{Saund \MakeLowercase{\textit{et al.}}: Selective Densification}

\begin{document}

\maketitle

\begin{abstract}
  We propose the Selective Densification method for fast motion planning through configuration space. We create a sequence of roadmaps by iteratively adding configurations. We organize these roadmaps into layers and add edges between identical configurations between layers. We find a path using best-first search, guided by our proposed estimate of remaining planning time. This estimate prefers to expand nodes closer to the goal and nodes on sparser layers.

  We present proofs of the path quality and maximum depth of nodes expanded using our proposed graph and heuristic. We also present experiments comparing Selective Densification to bidirectional RRT-connect, as well as many graph search approaches. In difficult environments that require exploration on the dense layers we find Selective Densification finds solutions faster than all other approaches.
\end{abstract}

\begin{IEEEkeywords}
  Motion and Path Planning
\end{IEEEkeywords}


\section{Introduction}
\label{sec:introduction}

\IEEEPARstart{W}{e} examine the motion planning problem, the task of finding a valid path
through continuous configuration space from a start to a goal.
\bsrevC{Probabilistically} complete \cite{RRTConnect} and \bsrevC{asymptotically} optimal \cite{RRTstar} algorithms are known, but to be practical algorithms must be fast.
The challenge for a search algorithm is to explore regions that will likely lead to a path, while not performing excessive checking of configurations that do not lead to the goal.


A common planning approach is to precompute a \bsrevA{(probabilistic) roadmap (PRM), which is a} graph where vertices represent configurations and edges represent motions.
Planning can then be reduced to connecting the query start and goal to the graph and finding a path through the graph.
In many robotics problems the obstacles are not known a priori, thus the validity of edges must be checked online.
The computation time of this approach is dominated by node expansions of a graph-search algorithm, and collision checks for edges.
\bsrevA{
  Lazy edge evaluation has proved effective in robotics, with algorithms like LazyPRM \cite{LazyPRM} and the generalization \bsrevA{Lazy Shorted Path} (LazySP) \cite{LazySP} that minimize the number of collision checks.
}

A challenge when precomputing a roadmap is choosing the appropriate
\bsrevA{density of nodes in configuration space.}
\bsrevA{Too few nodes and edges can result in a roadmap with no solutions.}
Increasing the number of nodes in the roadmap increases the computation cost by increasing both the number of node expansions and number of collision-checks during a search.
For many problems dense roadmaps are needed only in some regions while sparse roadmaps perform better in other regions with more free space.
We desire a search method that is able to shift between graphs of different densities, achieving fast performance by searching sparser graphs in most regions and only densifying when required.



Our key contribution is the Selective Densification method for guiding the \bsrevA{density of nodes and edges explored using a heuristic of remaining planning time.}
Specifically, this paper contributes:
\begin{itemize}
\item The Layered Graph: A sequence of roadmaps of increasing density combined into a single graph (\figref{fig:2dLayeredGraph})
\item The Selective Densification Heuristic: An estimate of both remaining execution cost and planning cost used to guide best-first search
\item Bidirectional search in LazySP with search direction during an iteration chosen by minimum planning time
\end{itemize}

\begin{figure}
    \centering
    \begin{subfigure}[b]{0.48\linewidth}
        \centering
        \includegraphics[width=\textwidth, trim={3cm 0 4cm 0},clip]{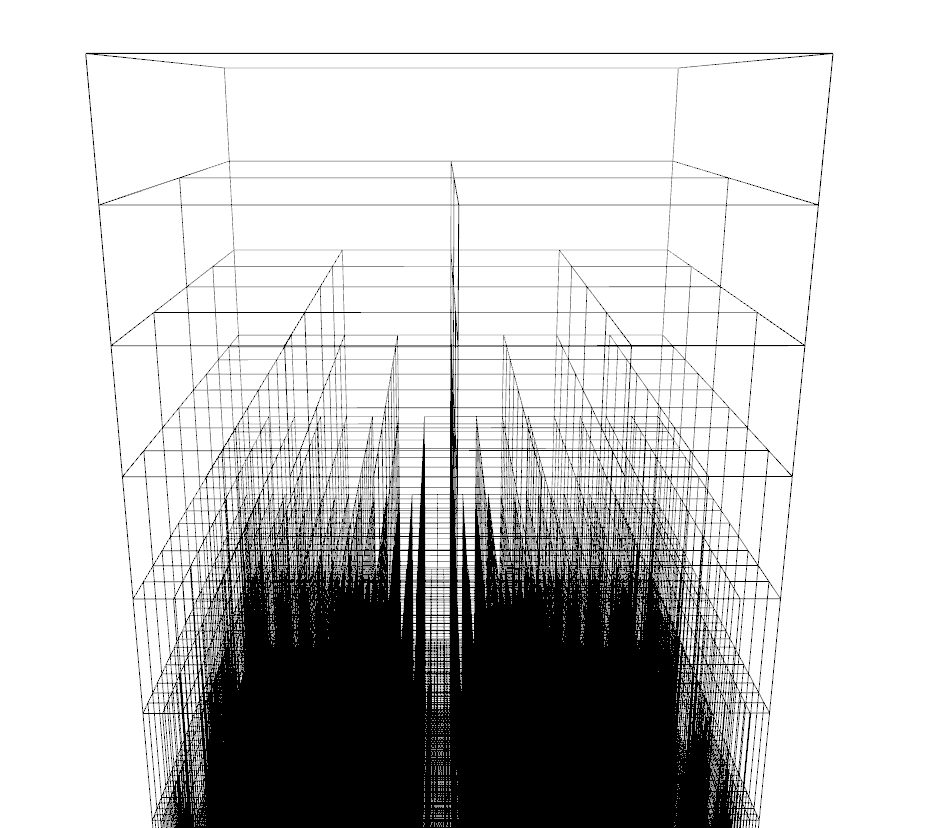}
    \end{subfigure}
    \hfill
    \begin{subfigure}[b]{0.48\linewidth}
        \centering
        \includegraphics[width=\textwidth, trim={4cm 0 3cm 0},clip]{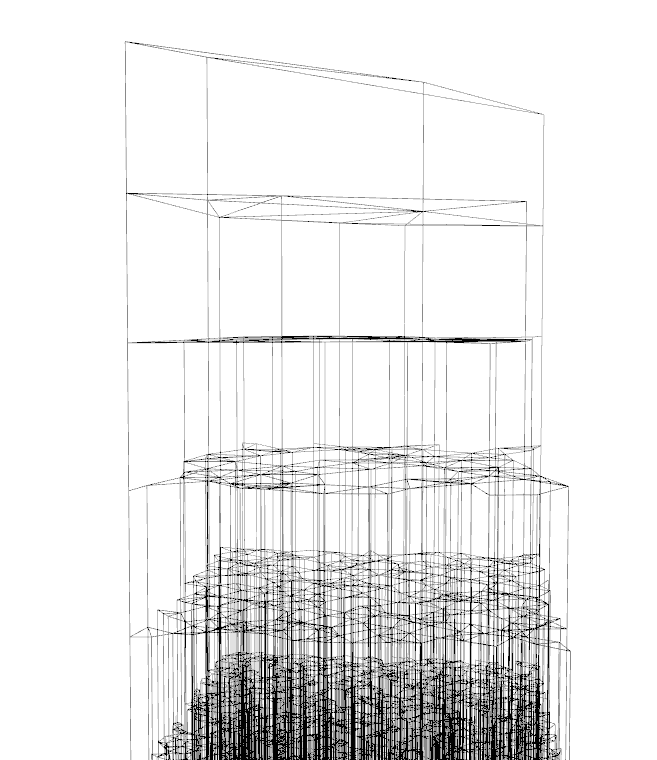}
    \end{subfigure}
    \caption{Layered Graphs in a 2D C-space using a grid structure (left) and halton sequence (right)}
    \label{fig:2dLayeredGraph}
\end{figure}



\bsrevA{We present proofs of the solution quality and maximum depth searched.}
To evaluate our method \bsrevA{in practice}, we perform motion planning experiments on a robotic manipulator arm in simulation and benchmark against existing methods.
\bsrevA{We find that in environments where paths exist on low-density graphs Selective Densification performs similarly to common methods.
In the most challenging environment tested, where paths exist only on high-density graphs, we find Selective Densification performs 4x faster than the next best approach.
}





\section{Related Work}
\label{sec:related_work}

\subsection{Sampling-based C-space planning}
In many robotics domains a graph is constructed to perform search over a continuous space.
Sampling-based motion planning consists of building a graph with an embedding into C-space and then searching the graph for a valid (collision-free) path from a start to a goal.

While approaches such as RRT\cite{RRTConnect} build a tree online, other approaches such as PRM\cite{PRM} can precompute a roadmap and perform online collision checking of edges.
\bsrevA{Both RRT and PRM seek feasible solutions in continuous space without explicit regard to path quality.
Early asymptotically-optimal planners such as RRT$^*$\cite{RRTstar} find minimum cost solutions in the limit as $t\rightarrow \infty$, with improvements such as BIT$^*$\cite{BITstar} seeking faster convergence.
Interestingly, in practice, repeated RRT with shortcut smoothing tends to outperform RRT$^*$ and variants
both in time and path quality \cite{Hauser} \cite{Meijer}.
Given these results, our approach focuses on finding solutions quickly and relies on the shortcut smoothing post-processing to reduce cost.
}

$\astar$\cite{Astar} can be used to search roadmaps, however the edge collision-check is typically the most expensive operation in robotics applications, thus algorithms such as LazyPRM\cite{LazyPRM} and the later generalization LazySP\cite{LazySP} that minimize the number of edge evaluations typically run faster.
\bsrevC{In fact, LazySP is optimal w.r.t. the number of edge checks \cite{Haghtalab2017ThePV} and later work balances time spent on edge checking vs. expansion \cite{Mandalika2018}}.
Using a precomputed roadmap offers the advantage that in static scenes edge validity is constant, so at most one collision check is required across multiple queries.
However even in a single query setting using a precomputed roadmap offers two distinct advantages over a RRT: determinism and the ability to precompute \bsrevC{environment-independent} properties of edges.
\bsrevC{
  We employ both forms by generating a graph in the robot configuration space and precomputing the swept volumes occupied by the robot for some edges.
  This is done without knowledge of the specific environment and prior to query thus we consider this precomputation not time sensitive.

}

\subsection{Densification Strategies}
A challenge when constructing roadmaps is determining the number of vertices and edges needed.
Selecting too few may yield a roadmap without a feasible solution or only a costly solution.
Selecting too many yields a roadmap that is computationally expensive to store and search.
Approaches such as SPARS\cite{SPARS}, bridge sampling \cite{PlanningAlgorithms}, and others therefore build explicit graphs online using specific vertex sampling and edge-connection methods that are aware of obstacles to limit the graph size.
\bsrevC{We pre-compute a much larger graph than these methods generate, but bias search towards the sparser portions.}

Other approaches use multi-resolution roadmaps, connecting a low resolution roadmap to high resolution roadmaps in certain regions specified by \bsrevC{heuristics} \cite{Likhachev2009} \cite{Petereit2013}.
\bsrevC{
  Planning with Adaptive Dimensionality uses a user-specified projection into a lower dimensional space, and reverts to the full dimensional space where the projection is inadequate \cite{Gochev2011} \cite{Cohen2011} \cite{Gochev2013}.
  For example, a projection may map a car to a 2D point or a robotic arm to the end effector location.
  This approach will only construct two graphs of different densities and relies on a user-defined projection that must obey certain properties.
}


Yet another approach is batching, where a fixed roadmap is searched completely before densifying \cite{Choudhury2017AnytimeMP}, \cite{Starek2015AnAS}. This has been used to first find solutions quickly on sparse layers then find better \cite{ChoudhuryDensification}, and asymptotically optimal \cite{Pavone2018} solutions by searching denser layers.
A drawback is the expense of the search of an entire layer before densifying, which can be especially large if no path exists.
As suggested in the future work of LEMUR \cite{Dellin2016} it would be desirable to further densify promising regions before exploring the entire batch.
Our work pursues this approach of selectively densifying certain regions by balancing the expected path length and computational cost at different batch levels.

\subsection{Heuristic Graph Search}
Finding a path in a roadmap requires searching a graph.
While $\astar$ finds optimal paths using an admissible heuristic\cite{Astar}, weighted $\astar$ inflates the heuristic \cite{Pohl1970} achieving bounded sub-optimality (both with \cite{ARAstarAnalysis} and without \cite{OptimisticAstar} using a closed list).
Methods such a ARA* \cite{ARAstar}, ANA* \cite{ANAstar} and numerous others observe that an inflated heuristic and/or a greedy search tend to find solutions faster, trading path cost for planning time, although there is no such guarantee and in some cases this can slow down search \cite{Wilt1977}.
\astareps \cite{Pohl1970}, BUGSY \cite{Ruml2013} and LEMUR \cite{Dellin2016} make this tradeoff explicitly by including various estimates of ``planning-cost-to-go'' in the search heuristic.
We extend this idea to our Layered Graph, on each layer increasing the estimated future planning cost, and therefore inflation factor.






\section{Problem Definition and Notation}
\label{sec:problem_definition}



Let $\cspace$ be a configuration space with free space $\cspace_\free$ and C-space obstacles $\cspace_\obs = \cspace \setminus \cspace_\free$.
$\cspace_\obs$ are represented implicitly via a collision-check function that given a $q \in \cspace$ tests whether $q \in \cspace_\free$. 
A path $\pathsg$ from $q_s$ to $q_g$ is feasible if $\pathsg \subset \cspace_\free$
In practice we relax feasibility by assuming it is sufficient to collision-check a densely discretized $\path_{s \rightarrow g}$.

The execution cost of the straight-line motion between configurations is given by $\ec: \cspace \times \cspace \rightarrow \reals_{\geq 0}$, which for this work we assume is given by a distance: $\ec(q_1, q_2) = ||q_1 - q_2||$.
The execution cost of a feasible discretized path $\pathsg = [q_1, ..., q_n]$ is given by
\begin{align}
  \ec(\pathsg) = \sum_{i=1}^{n-1} \ec(q_{i}, q_{i+1})
\end{align}

Furthermore, a planner incurs a cost $\pc$ in computing a feasible path $\path$.
We consider the case where $\pc$ is the time spent to return $\path$ once $q_s, q_g$ and $\cspace_\obs$ are provided.
\bsrevA{As the primary objective we seek planners that find a feasible path as quickly as possible. }

Define a graph $\graph(\vertices, \edges)$ with vertices $\vertices$ and edges $\edges$.
A roadmap is an embedding of $\graph$ into $\cspace$ such that vertices map to configurations $\configuration \in \cspace$ and edges map to C-space paths $\path_e \subset \cspace$ connecting vertices.
Let an $r$-disk graph be a graph with edges connecting exactly those vertices $v_1, v_2$ for which the corresponding configuration $q_1, q_2$ satisfy $||q_1 - q_2|| < r$.

An edge $e$ is collision-free if and only if the corresponding path $\path_e \subset \cspace_\free$.
As such, a feasible path through a graph from $\node_s$ to $\node_g$ consists of collision-free edges and induces a feasible C-space path $\path$ from $q_s$ to $q_g$.
Each edge has an associated cost, and the cost of a full path is the sum of the edge costs.

For use in lazy edge evaluation, let $\ungraph$ denote the graph $\graph$ initialized with the collision state of all edges labeled as ``unknown''.


\section{Approach: Selective Densification}
\label{sec:approach}

Our approach consists of finding a path on a graph with an embedding into C-space.
The core innovation is the combination of specific graph structure with a heuristically guided search that balances execution cost and expected remaining planning cost.
The graph has connected layers of different densities allowing the search to traverse to denser layers to navigate through tight spaces with precision and then return to sparser layers that are searched more quickly.
The graph is searched using $\astar$ with an inadmissible heuristic estimating both path cost and planning cost.
As the true remaining planning cost is unknown, it is estimated via a heuristic that grows as the layer depth increases.

\subsection{Graph Structure}
Consider a sequence of unique configurations $Q = (q_1, q_2, \dots)$, and a strictly increasing sequence of positive integers $(n_1 < n_2 < ... n_\depthmax)$.
A layer $L_i$ is an $r$-disk graph constructed from the first $n_i$ configurations of $Q$ and connection radius $r_i$.
Denote a vertex of $L_i$ with $v^i_j$ where $j\leq n_i$.
For each pair of adjacent layers $(L_i, L_{i+1})$ define the inter-layer edges as $\edges_{i \leftrightarrow i+1} = \{e(v^i_j \leftrightarrow v^{i+1}_j) \  \forall j \leq n_i \}$.
Note that $v^i_j$ and $v^{i+1}_j$ represent the same configuration $q_j$, thus the inter-layer edges are zero-cost edges that change layers without changing configurations.
The Layered Graph is defined as $\graph = (\union_i L_i) \union (\union_i \edges_{i \leftrightarrow i+1})$.

We define $\node.\config$ as the configuration associated with vertex $\node$ and $\node.\depth$ 
as the layer number $i$ where $\node \in \layer_i$

Figure \ref{fig:2dLayeredGraph} visualizes two Layered Graphs in a 2D C-space, with the vertical dimension representing layer depth.
Vertical edges are therefore the zero-cost edges connecting vertices in adjacent layers representing the same configuration.

Although this graph can be precomputed, a query may consist of a $q_s$ and $q_g$ that are not in $\graph$, thus during a query vertices for $q_s$ and $q_g$ are added to each layer, edges within each $L_i$ are added determined by the connection radius $r_i$, and inter-layer edges are added for the vertices corresponding to $q_s$ and $q_g$.

\begin{algorithm}
\caption{Selective Densification Search} \label{alg:lazysp}
\begin{algorithmic}[1]
\While{True}
\State $\path \leftarrow \graphsearch (\ungraph, v_s, v_g)$ \label{alg:lazysp:bestfirst}
\If{CheckEdges($\path$)}
\State \textbf{return} $\path$ 
\EndIf
\EndWhile
\end{algorithmic}
\end{algorithm}

\begin{algorithm}
  \caption{
    Bidirectional Selective Densification Search} \label{alg:bilazysp}
  \begin{algorithmic}[1]
    \State $t_{forward} \leftarrow 0$
    \State $t_{backward} \leftarrow 0$
    \While{True}
    \If{$t_{forward}\leq t_{backward}$}
    \State $\path \leftarrow \graphsearch(\ungraph, v_s, v_g)$ \label{alg:bilazysp:forward}
    \State $t_{forward} \leftarrow t_{forward} +$ timeOf(Line \ref{alg:bilazysp:forward})
    \Else
    \State $\path \leftarrow \graphsearch(\ungraph, v_g, v_s)$.reverse()
    \label{alg:bilazysp:backward}
    \State $t_{backward} \leftarrow t_{backward} +$ timeOf(Line \ref{alg:bilazysp:backward})
    \EndIf
    \label{alg:bilazysp:bestfirst}
    \If{CheckEdges($\path$)}
    \State \textbf{return} $\path$ 
    \EndIf
    \EndWhile
  \end{algorithmic}
\end{algorithm}

\begin{algorithm}
  \caption{CheckEdges($\path$)} \label{alg:checkedges}
  \begin{algorithmic}[1]
    \For{$\edge$ in $\path$}
    \If{$\edge$ is not valid} \label{alg:checkedges:collisioncheck} \Comment{collision check}
    \State mark $\edge$ as $invalid$
    \State \textbf{return} False
    \EndIf
    \State mark $edge$ as $valid$
    \EndFor
    \State \textbf{return} True
  \end{algorithmic}
\end{algorithm}

\begin{algorithm}
\caption{\graphsearch($\ungraph, \node_s, \node_g$)} 
\label{alg:bestfirst}
\begin{algorithmic}[1]
  \State $open \leftarrow \{\node_s\}, closed \leftarrow \{\}, \gest(\node) \leftarrow \infty,
  \gest(\node_\start)=0$
\While{$open$ is not empty}
\State $\node \leftarrow open$.pop\_lowest\_$f$value()
\State insert $\node$ into $closed$ 
\If{$\node$ is $\node_g$}
\State \textbf{return} ReconstructPath($\node_s, \node_g$)
\EndIf
\For{edge $\edge$ and successor $\successor$ of $\node$}
\If{$\edge$ is $invalid$} skip
\EndIf
\If{$\gest(\successor) > \gest(\node) + \ec(\edge)$}
\State $\gest(\successor) = \gest(\node) + \ec(\edge)$
\If{$\successor \not\in$  $closed$} \label{alg:bestfirst:inclosed}
\State $\fest(\successor) = \gest(\successor) + \sdh(\successor)$
\State insert $\successor$ into $open$
\EndIf
\EndIf
\EndFor
\EndWhile
\State \textbf{return} No Path Exists
\end{algorithmic}
\end{algorithm}

\subsection{Utility Guided Graph Search}

To answer a query $(q_s, q_g)$ first the corresponding graph nodes $(\node_s, \node_g)$ are found (or added), a best-first (\astar) search with lazy edge evaluation is performed over the graph yielding a path through $\graph$, and the induced C-space path is returned.
Lazy edge evaluation attempts to reduce the number of edge collision-checks, but may require many calls to \astarns.
This approach has been called LazyPRM \cite{LazyPRM} or \LazySP \cite{LazySP}.

Algorithms \{\ref{alg:lazysp} or \ref{alg:bilazysp}\} and \ref{alg:checkedges} show the outer loops that perform the lazy edge evaluation.
Line \ref{alg:lazysp:bestfirst} of \algoref{alg:lazysp} performs a best-first search over $\ungraph$, which is graph $\graph$ with the optimistic assumption that unevaluated edges are collision free.
Edges along this optimistic path are collision-checked \bsrevA{in order} by \algoref{alg:checkedges}.
The result of a collision check is stored so that future iterations of \graphsearch may not traverse invalid edges.

\algoref{alg:bilazysp} implements bidirectional \LazySP as bidirectional search tends to find solutions faster than unidirectional variants.
Bidirectional \LazySP alternates a unidirectional $\astar$ search at each iteration reusing the results of collision checks and does not require the algorithmic machinery of bidirectional variants of $\astar$ to maintain guarantees.
A common implementation of bidirectional search alternates search direction at each iterations.
We introduce this variant that balances the total time spent searching in each direction.
If the search time per iteration is independent of direction, our version is equivalent to alternating direction after each iteration.
However in practice we observe the $\astar$ time is not negligible and can have a huge dependence on direction.
In such cases we empirically observe significantly better performance when balancing total time instead of iterations.


Algorithm \ref{alg:bestfirst} is $\astar$ with invalid edges removed, and using the inadmissible heuristic $\sdh$.
The functions $\gtrue(\node), \htrue(\node),$ and $\ftrue(\node)$ for a node are the optimal cost-to-come, cost-to-go, and cost from the start to goal through $\node$ respectively.
$\gest, \sdh,$ and $\fest$ are estimates of these quantities.


\subsection{Heuristics} \label{sec:PlanningHeuristic}
The heuristic $\sdh$ contains both a heuristic estimate of the execution cost-to-go ($\ah$)  and the planning time-to-go $(\timeh)$.
We choose $\ah$ as the euclidean distance to the goal.
$\ah$ is consistent (and therefore admissible) and the cost is only achieved if there is a straight line path from $\node$ to the goal.
$\timeh$ is proportional to distance $||\node.q - q_g||$, the number of nodes in the layer $n_{\node.\depth}$, and a tunable constant $\sdw$.

\begin{align}
  \ah(\node) &= ||\node.q - q_g|| \label{eq:ah}\\
  \timeh(\node) &= \sdw n_{\node.\depth} ||\node.q - q_g||\\
  \sdh(\node) &= \ah(\node) + \timeh(\node)\\
  &= \ah(\node) (1 + \sdw n_{\node.\depth}) \label{eq:heuristic}
\end{align}

We now discuss our choice of $\timeh$, with formal analysis provided in the next section.
First consider that $\timeh$ is proportional to $||\node.q - q_g||$.
Assuming a maximum edge length, doubling the distance to the goal will double the lower bound on the number of node expansions required (approximately, due to the discretization of nodes).
Mathematically, proportionality to $||\node.q - q_g||$ causes $\timeh$ to appear as an inflation of $\ah$, which has empirically shown to reduce planning times \cite{Pohl1970} \cite{Wilt1977}.

\bsrevB{
  Next, we discuss $\timeh(\node) \propto  n_{\node.\depth}$.
  Consider search over a traditional roadmap composed of a single layer.
  We expect search time to increase for a denser roadmap
  If an oracle guides a search to only expand nodes on the optimal path, we might expect search time to be inversely proportional to the maximal edge.
  However, consider the case of misleading cul-de-sacs, as shown in \figref{fig:culdesac}, where a best-first search must expand nodes in the entire volume of the cul-de-sac, thus the total time is proportional to the total number of nodes.
  We intentionally tailor $\timeh$ to these environments.
}

Finally, consider the constant $\sdw$.
Theoretically $\sdw$ can appear as a weighting between a user's preference between planning time and execution time.
Clearly if a user is agnostic about planning time $\sdw$ should be set to 0, as then $\sdh$ reduces to $\ah$, and thus \graphsearch will yield the optimal path on $\ungraph$.
On the other hand if the goal is simply to find a feasible path as quickly as possible, setting $\sdw$ very high yields a greedy search over each layer, which empirically finds solutions quickly \cite{Wilt1977, ANAstar}.
However, the optimal $\sdw$ might also depend on factors such as the expected path length, past planning experience, expected size of cul-de-sacs, and expected percentage of freespace in $\cspace$.
In practice we observe that regardless of the value of $\sdw$, shortcut smoothing tends to yield paths with similar cost.

\figref{fig:2dSearch} illustrates (unidirectional) Selective Densification search applied to a toy example.
Because $\timeh \propto n_{\node.\depth}$, \graphsearch tends to explore nodes in sparser layers first.
The search first explores the sparsest edges in the top layer but neither is valid.
The search progresses to nodes on denser layers closest to the goal, and then pops back up after navigating the narrow region.
Although the graph extends deep, these edges are never explored, due to the higher planning heuristic of nodes on denser layers.

\bsrevB{As an aside, suppose Selective Densification finds a path well before a robot can begin execution.}
An anytime method that converges to the optimal path could be created through the following modifications.
  Store the cost of the best path found so far $\ec(\path^{Best})$ after \algoref{alg:bilazysp}.
  Add the check $\gest(\successor) + \ah(\successor) < \ec(\path^{Best})$ to \algoref{alg:bestfirst} \algolineref{alg:bestfirst:inclosed} to prevent expansion of nodes that could not possibly lead to better paths.
  Repeat \algoref{alg:bilazysp} until the user requests a result, or until no nodes are on the open list.
  One could imagine a variety of schemes to decrease $\sdw$, but choosing a particular scheme is beyond the scope of this work.



\begin{figure}
    \centering
    \begin{subfigure}[b]{0.45\linewidth}
        \centering
        \includegraphics[width=\textwidth]{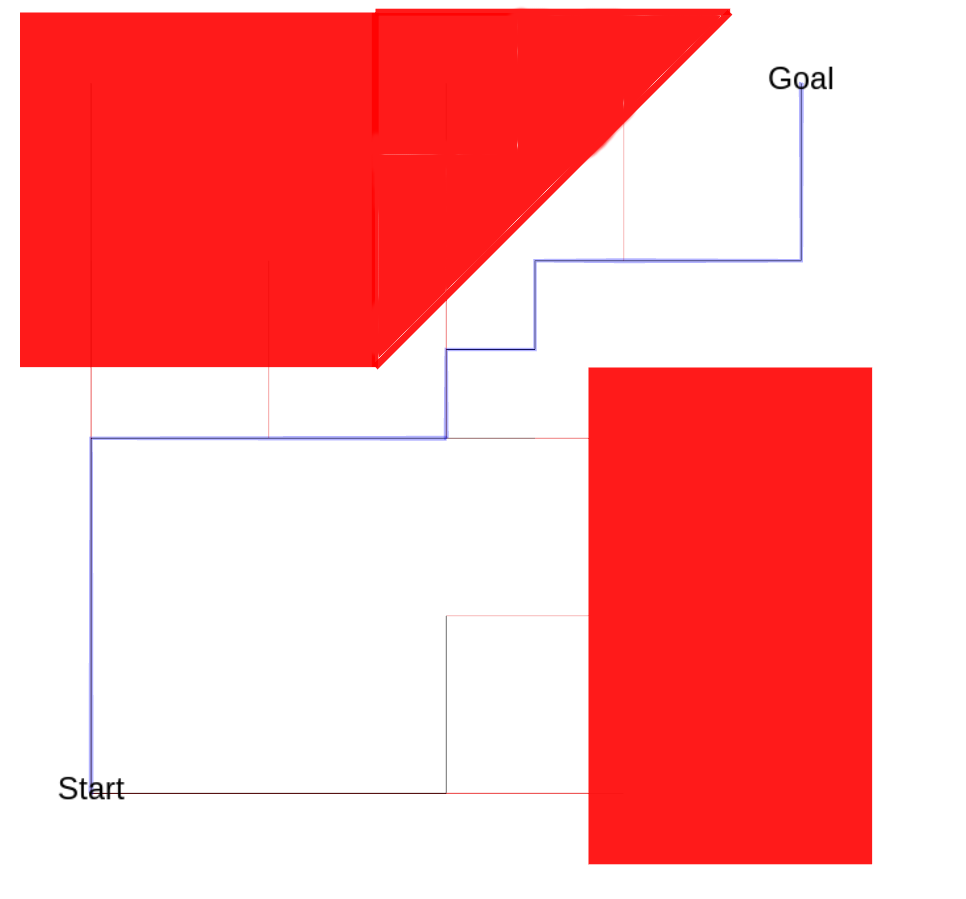}
    \end{subfigure}
    \hfill
    \begin{subfigure}[b]{0.45\linewidth}
        \centering
        \includegraphics[width=\textwidth]{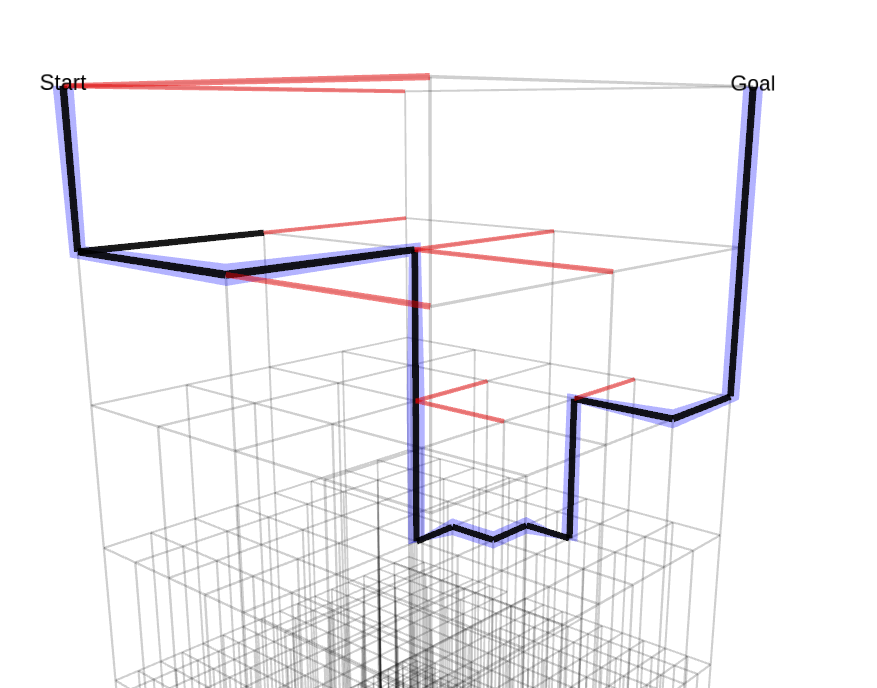}
    \end{subfigure}
    \caption{A 2D search problem solved using Selective Densification with evaluated edges shown in black (valid) and red (invalid) and the final path shown in blue. Left: 2D view with red obstacles. Right: View of Layered Graph with unevaluated edges shown in light grey
    }

    \label{fig:2dSearch}
\end{figure}

\section{Analysis}
\label{sec:analysis}

\begin{figure}
    \centering
    \begin{subfigure}[b]{0.8\linewidth}
        \centering
        \includegraphics[width=\textwidth]{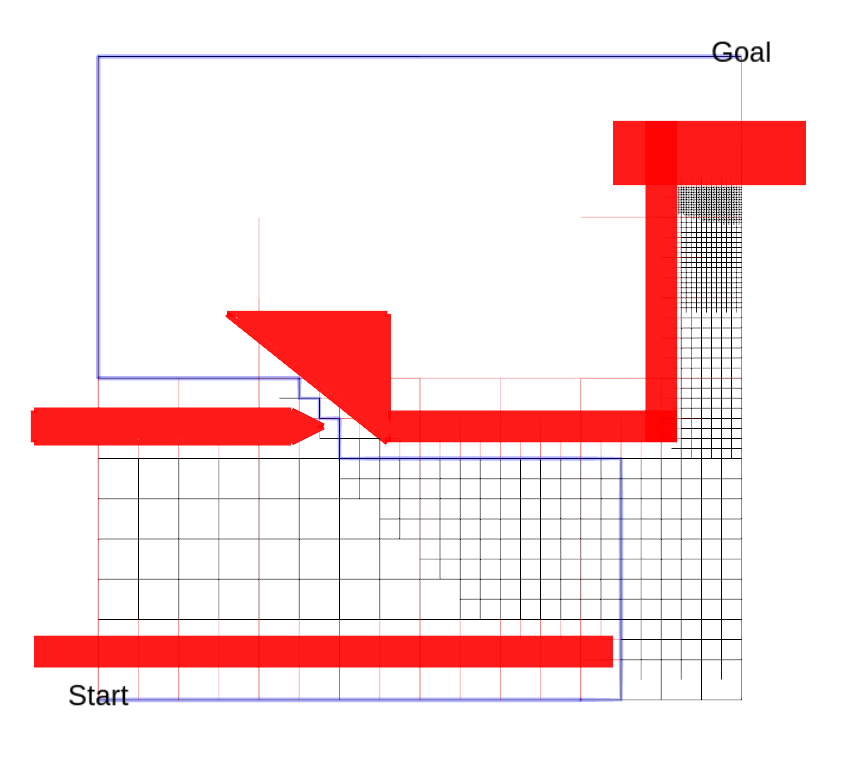}
    \end{subfigure}
    \hfill
    \caption{All edges explored using \algoref{alg:bestfirst} (unidirectional) search with a misleading trap.}
    \label{fig:culdesac}
\end{figure}


In this section we prove bounds on the solution cost and search depth of Selective Densification.
Because $\graph$ contains nodes with different estimated planning times connected by zero-cost (vertical) edges, $\sdh$ is not simply an inflated consistent heuristic, however we build up similar guarantees relying on consistency across each layer of $\graph$.

For this section define $\gtrue_i$ as $\gtrue$ over the single-Layered Graph $\layer_i$.
\bsrevC{
  References to $\astar$ use the proposed heuristic Eq. \eqref{eq:heuristic}, which is written as a per-layer inflation factor $\sdh(\node) = \inflation_i \ah(\node)$ with $\inflation_i = (1 + \sdw n_i)$.
  }

First, consider the \graphsearch search over a single layer ($\graph = \layer_i$). 

\begin{thm}
  \label{thm:boundedg}
  Any $\node$ expanded by $\graphsearch (\layer_i, \node_\start, \node_\goal)$ satisfies $\gest(\node) \leq \inflation_i \gtrue(\node)$.
  
\end{thm}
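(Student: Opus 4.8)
The plan is to adapt the classical weighted-$\astar$ argument (bounded suboptimality of the closed list under an $\epsilon$-inflated consistent heuristic) to this single-layer setting. Since we are told to restrict attention to $\graph = \layer_i$, the inter-layer zero-cost edges play no role here, and on a single layer the heuristic is simply $\sdh(\node) = \inflation_i\,\ah(\node)$ with $\inflation_i = 1 + \sdw n_i$ a fixed constant. Because $\ah$ is consistent, $\sdh$ is an $\inflation_i$-admissible (indeed $\inflation_i$-consistent) heuristic, so I expect the standard inductive bound to carry over essentially verbatim; the novelty of the theorem is really that it sets up the per-layer invariant that the subsequent multi-layer analysis will glue together.

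First I would establish the key invariant maintained by \graphsearch on $\layer_i$: at any point before $\node$ is popped from \emph{open}, there is a node $\node'$ on \emph{open} lying on an optimal cost-to-come path to $\node$ (equivalently, to every not-yet-expanded node) with $\gest(\node') = \gtrue(\node')$. This is the usual ``\emph{open} separates the start from the frontier'' lemma for $\astar$ with a consistent-per-layer expansion order, and it holds because edge costs are nonnegative and \graphsearch relaxes edges exactly as Dijkstra/$\astar$ does. Next I would use consistency of $\ah$ along that optimal prefix path to get $\ah(\node') \le \gtrue(\node) - \gtrue(\node') + \ah(\node)$ — the triangle-type inequality that consistency of the true-metric heuristic gives us. Then, since $\node$ is the node being popped and $\node'$ is on \emph{open}, we have $\fest(\node) \le \fest(\node')$, i.e.
\begin{align}
  \gest(\node) + \inflation_i \ah(\node) \le \gest(\node') + \inflation_i \ah(\node') = \gtrue(\node') + \inflation_i \ah(\node').
\end{align}
Substituting the consistency bound on $\ah(\node')$ and using $\inflation_i \ge 1$, the $\ah$ terms telescope/absorb and one is left with $\gest(\node) \le \gtrue(\node') + \inflation_i(\gtrue(\node) - \gtrue(\node')) \le \inflation_i \gtrue(\node)$, which is exactly the claim. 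A small side remark: one should note $\gest(\node) \ge \gtrue(\node)$ always (any tentative $g$-value corresponds to a real path), so the bound is genuinely sandwiching $\gest(\node)$.

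The main obstacle I anticipate is purely bookkeeping rather than conceptual: making the ``there is always an optimal-prefix node on \emph{open}'' invariant airtight given the exact pseudocode of \algoref{alg:bestfirst}, in particular the subtlety that \graphsearch here uses a closed list and does not reopen nodes (the check at \algolineref{alg:bestfirst:inclosed}). I would need to argue that no node on an optimal path gets closed with a suboptimal $\gest$ before its optimal predecessor is expanded — which again follows from the fact that along an optimal path the $\fest$-values are nondecreasing because $\ah$ is consistent, so the optimal path's nodes are expanded in order and each receives its optimal $\gest$ at the moment its predecessor is expanded. Once that monotonicity-along-optimal-paths fact is stated cleanly, the rest is the short chain of inequalities above. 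I would keep the write-up tight by citing the consistency of $\ah$ (already established in \sref{sec:PlanningHeuristic}) and treating the $\astar$-with-inflated-heuristic machinery as standard, flagging only where the per-layer (as opposed to global) nature of $\inflation_i$ matters — which, on a single layer, it does not.
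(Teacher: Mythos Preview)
Your overall strategy---recognize that on a single layer $\sdh = \inflation_i \ah$ is just an $\inflation_i$-inflation of a consistent heuristic, then run the standard weighted-$\astar$ bounded-suboptimality argument---is exactly what the paper does; the paper simply observes this reduction and cites \cite{ARAstarAnalysis}, Theorem~10, rather than writing the argument out.

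However, the invariant you propose to carry is too strong, and your justification for it is incorrect. You claim that the first node $\node'$ on an optimal path to $\node$ still in \emph{open} has $\gest(\node') = \gtrue(\node')$, and you argue this from ``$\fest$-values are nondecreasing along an optimal path because $\ah$ is consistent.'' Neither statement survives inflation: along an optimal path the increment in $\fest$ (with $\gest=\gtrue$) is $c(\node_j,\node_{j+1}) + \inflation_i(\ah(\node_{j+1})-\ah(\node_j)) \ge (1-\inflation_i)\,c(\node_j,\node_{j+1})$, which is nonpositive for $\inflation_i>1$. A concrete counterexample: nodes $S,A,B,C,G$ with edges $S\!-\!A{:}1$, $A\!-\!B{:}1$, $B\!-\!C{:}1$, $C\!-\!G{:}2$, shortcut $S\!-\!B{:}3$; consistent $\ah=(3,2,1,2,0)$; $\inflation=5$. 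Then $B$ is expanded before $A$ with $\gest(B)=3>\gtrue(B)=2$; when $A$ is later expanded the closed-list check blocks the update to $B$, and the first not-closed node on the optimal path becomes $C$ with $\gest(C)=4>\gtrue(C)=3$.

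The fix is to weaken the invariant to exactly the theorem's bound: by induction on expansions, the first not-closed node $\node'$ on the optimal path satisfies $\gest(\node')\le\inflation_i\gtrue(\node')$. This follows because its predecessor was expanded with the same bound, and the edge-relaxation then gives $\gest(\node')\le\inflation_i\gtrue(\node_{m-1})+c\le\inflation_i\gtrue(\node')$. Your chain of inequalities then still closes: $\fest(\node)\le\fest(\node')\le\inflation_i\gtrue(\node')+\inflation_i\ah(\node')\le\inflation_i(\gtrue(\node)+\ah(\node))$, hence $\gest(\node)\le\inflation_i\gtrue(\node)$. This is precisely the Likhachev argument the paper cites.
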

\begin{proof}
  On a single layer \graphsearch reduces to weighted-$\astar$ with $\sdh$ equivalent to the consistent heuristic $\ah$ inflated by $\inflation_i$, thus we can directly apply the proof presented in \cite{ARAstarAnalysis}, Theorem 10. 
\end{proof}

Next we show this per-layer bound holds on the full graph.

\begin{thm}
  \label{thm:boundedsubopt}
  Consider an optimal path $\pathLi = (\node^i_\start = \node_1, ..., \node_k =  \node^i_\goal)$ on $\layer_i$.
  For every $\node_j \in \pathLi$ expanded by \graphsearch over $\ungraph$ it holds $\gest(\node_j) \leq \inflation_i \gtrue_i(\node_j)$
\end{thm}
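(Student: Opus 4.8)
The plan is to run the same argument that underlies \thmref{thm:boundedg} (the weighted-$\astar$-without-reopening suboptimality proof of \cite{ARAstarAnalysis}), but to follow the \emph{specific} path $\pathLi$ rather than a generic optimal path. The key structural fact is that every vertex of $\pathLi$ lies on the single layer $\layer_i$, so along $\pathLi$ the heuristic $\sdh$ equals the consistent heuristic $\ah$ scaled by the one constant $\inflation_i$; the other layers and the zero-cost inter-layer edges enter only as alternative --- never more expensive --- ways of reaching these vertices, so they can only lower the $\gest$ values \graphsearch assigns on $\pathLi$. (We analyze a single invocation of \graphsearch; edges already marked invalid are simply absent, and $\pathLi$, $\gtrue_i$ are taken with respect to that graph.)

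I would prove, by induction on the order in which \graphsearch over $\ungraph$ expands vertices, the invariant: each $\node_j\in\pathLi$ that is expanded satisfies $\gest(\node_j)\le\inflation_i\gtrue_i(\node_j)$ at the moment it is popped. The step rests on a frontier claim: just before $\node_j$ is popped, let $\node_{j'}$ (with $j'\le j$) be the first vertex of the prefix $(\node_1,\dots,\node_j)$ not yet in the closed list. Then $\node_{j'}\in open$ and $\gest(\node_{j'})\le\inflation_i\gtrue_i(\node_{j'})$. For $j'=1$ this is immediate: the zero-cost inter-layer edges make $\gtrue(\node^i_\start)=0$ in $\ungraph$, so $\gest(\node_1)=0$. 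For $j'\ge 2$ the predecessor $\node_{j'-1}$ was already expanded, and at that expansion it relaxed the $\pathLi$-edge into $\node_{j'}$, giving $\gest(\node_{j'})\le\inflation_i\gtrue_i(\node_{j'-1})+\ec(\node_{j'-1},\node_{j'})$ by the inductive hypothesis applied to that earlier expansion; combining this with the optimal-substructure identity $\gtrue_i(\node_{j'})=\gtrue_i(\node_{j'-1})+\ec(\node_{j'-1},\node_{j'})$ and with $\inflation_i\ge 1$, $\gtrue_i(\node_{j'-1})\le\gtrue_i(\node_{j'})$ yields the claimed bound; and since $\gest$ only decreases and a vertex leaves $open$ only by entering $closed$, $\node_{j'}$ is still on $open$.

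With the frontier vertex identified, the bound for $\node_j$ follows by algebra. Since $\node_j$ is selected for expansion over $\node_{j'}\in open$, and both $\node_j$ and $\node_{j'}$ lie on $\layer_i$ so that $\sdh=\inflation_i\ah$ at both, $\gest(\node_j)+\inflation_i\ah(\node_j)=\fest(\node_j)\le\fest(\node_{j'})=\gest(\node_{j'})+\inflation_i\ah(\node_{j'})\le\inflation_i\gtrue_i(\node_{j'})+\inflation_i\ah(\node_{j'})$. Telescoping consistency of $\ah$ along the sub-path of $\pathLi$ from $\node_{j'}$ to $\node_j$ gives $\ah(\node_{j'})\le\ah(\node_j)+\bigl(\gtrue_i(\node_j)-\gtrue_i(\node_{j'})\bigr)$; substituting, the $\gtrue_i(\node_{j'})$ and $\inflation_i\ah(\node_j)$ terms cancel and leave $\gest(\node_j)\le\inflation_i\gtrue_i(\node_j)$.

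I expect the main work to be getting the frontier claim right under \graphsearch's closed list with no re-opening. Because a vertex of $\pathLi$ can be popped before its $\pathLi$-predecessor (via a detour through another layer), the frontier vertex carries only the inflated bound $\inflation_i\gtrue_i$, not the exact prefix cost --- which is why the invariant must be phrased with the $\inflation_i$ factor and proved by induction on \emph{expansion order} rather than on position along $\pathLi$; happily the $\gtrue_i(\node_{j'})$ terms cancel above, so the inflated frontier bound suffices. One must also check that the first non-closed prefix vertex is genuinely on $open$ rather than undiscovered, which for $\node_1=\node^i_\start$ uses the zero-cost inter-layer chain making every layer's copy of $q_s$ available on $open$ at cost $0$. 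The one indispensable structural ingredient --- and the reason this is more than a restatement of \thmref{thm:boundedg} --- is that $\pathLi$ never leaves $\layer_i$, so the frontier vertex shares the same inflation factor $\inflation_i$ as $\node_j$, without which the cancellation in the last display would fail.
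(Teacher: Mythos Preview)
Your argument is essentially the paper's: both run an induction on expansion order, locate a witness on (or associated with) $\pathLi$ that is still on the open list with a good $\gest$ bound, and then compare $\fest$ values using the consistency of $\ah$ telescoped along $\pathLi$. The paper phrases the inductive step contrapositively (a ``bad'' $\badnode\in\pathLi$ with $\gest(\badnode)>\inflation_i\gtrue_i(\badnode)$ cannot be selected because some $\goodnode$ has smaller $\fest$), but the algebra is identical to what you do directly.

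There is one place where your write-up is looser than the paper. In the $j'=1$ case you assert $\gest(\node^i_\start)=0$ because $\gtrue(\node^i_\start)=0$; but $\gtrue=0$ does not by itself force $\gest=0$ under an inadmissible heuristic, and in fact $\node^i_\start$ may not yet be on the open list at all --- it is discovered only after the sparser copies $\node^{i-1}_\start,\node^{i-2}_\start,\dots$ have been expanded. The paper's Case~2 handles precisely this: the witness there is the shallowest unexpanded start copy $\node^{i'}_\start$ with $i'\le i$, which is on open with $\gest=0$ by a separate short induction along the zero-cost chain, and then $\fest(\node^{i'}_\start)=\inflation_{i'}\,\ah(q_s)\le\inflation_i\,\ah(q_s)$ because $\inflation_{i'}\le\inflation_i$. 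The telescoping bound then goes through exactly as in your main case. So your closing remark that the frontier vertex \emph{must} share the inflation factor $\inflation_i$ is slightly off --- in this boundary case it carries a strictly smaller one, and that inequality (not equality) is what makes the comparison work.
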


\begin{proof}

  \bsrevC{
    Each iteration of $\astar$ expands the node with the lowest $\fest$, thus it is sufficient to show that if a node $\badnode \in \pathLi$ is on the open list with $\gest(\badnode) > \inflation_i \gtrue_i(\badnode)$ then there must be another node on the open list with lower $\fest$-value.
    We show this by induction, by showing that if at some iteration \thmref{thm:boundedsubopt} holds for all $\node \in \pathLi$ expanded so far implies it holds for the next iteration.
    The base case trivially holds, as initially no nodes have been expanded.
  }
  
  \bsrevC{
    Consider some iteration of $\astar$ and assume \thmref{thm:boundedsubopt} holds for all $\node \in \pathLi$ expanded so far.
    Consider some node $\badnode \in \pathLi$ with $\gest(\badnode) > \inflation_i \gtrue_i(\badnode)$, thus with $\fest(\badnode) > \inflation_i (\gtrue_i(\badnode) + \ah(\badnode))$.
    Note that if no such $\badnode$ exists, \thmref{thm:boundedsubopt} trivially holds for the next iteration.
    We show that $\badnode$ will not be chosen for expansion.
    
    Case 1: A node before $\badnode$ along $\pathLi$ has been expanded. There then must be a $\goodnodepred \in \pathLi$ before $\badnode$ along $\pathLi$ with successor $\goodnode \in \pathLi$ on the open list.
    By assumption $\gest(\goodnodepred) \leq \inflation_i \gtrue_i(\goodnodepred)$.
    It follows that $\gest(\goodnode) \leq \inflation_i \gtrue_i(\goodnode)$.
    \begin{align}
      \gest(\goodnode)
      &\leq \gest(\goodnodepred) + \ec(\goodnodepred, \goodnode)\\
      &\leq \inflation_i \gtrue_i(\goodnodepred) + \ec(\goodnodepred, \goodnode)\\
      &\leq \inflation_i \gtrue_i(\goodnode)
    \end{align}
    We show $\goodnode$ has a lower $\fest$-value than $\badnode$ so the assumption will also hold in the next iteration.

    For every node along $\pathLi$ it holds that
    \begin{align}
      \gtrue_i(\node_j) + \ah(\node_j) 
      &\leq \gtrue_i(\node_{j}) + \ah(\node_{j+1}) + \ec(\node_j, \node_{j+1})
      \label{eq:subopt:consistency} \\
      &= \gtrue_i(\node_{j+1}) + \ah(\node_{j+1})
      \label{eq:subopt:gval}
    \end{align}
    
    Eq. \eref{eq:subopt:consistency} is obtained using the consistency of $\ah$ over $\layer_i$.
    Eq. \eref{eq:subopt:gval} uses $\gtrue_i(\node_m) + \ec(\node_m, \node_{m+1}) =\gtrue_i(\node_{m+1})$ along the optimal path.

    This is then used to related the $\fest$-values of $\goodnode$ and $\badnode$:
    \begin{align}
      \fest(\goodnode)
      &= \gest(\goodnode) + \inflation_i \ah(\goodnode) \label{eq:subopt:def}\\
      &\leq \inflation_i (\gtrue_i(\goodnode) + \ah(\goodnode))\\
      &\leq \inflation_i (\gtrue_i(\badnode) + \ah(\badnode)) \label{eq:subopt:jump}\\
      &< \fest(\badnode) \label{eq:subopt:final}
    \end{align}
    Thus $\badnode$ will not be expanded in this iteration.
  }

  \bsrevC{

    
    Case 2: No node before $\badnode$ along $\pathLi$ has been expanded yet.
    Thus there must be some $\node^{i'}_s$ with $i' \leq i$ on the open list.
    That is, the open list will contain either the start node, or a node on a denser layer representing the same configuration as the start node.
    It is straightforward to see $\fest(\node^{i'}_s) < \fest(\node^{i}_s) \forall i' \leq i$.
    Applying the same analysis from Case 1: $      \fest(\node^{i'}_s) < \fest(\badnode)$
  }

\end{proof}

\begin{cor}
  Any $\node^i \in \layer_i$ expanded by \graphsearch on $\graph$ satisfies $\gest(\node) \leq \inflation_i \gtrue_i(\node)$.
\end{cor}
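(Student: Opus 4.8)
The plan is to re-run the proof of \thmref{thm:boundedsubopt} essentially unchanged, making two adjustments: replace the goal vertex $\node^i_\goal$ by the layer-$i$ vertex $\node$ whose expansion we are analyzing, and replace $\pathLi$ by an optimal path of $\layer_i$ from $\node^i_\start$ to $\node$; and strengthen the induction hypothesis so that it asserts $\gest(\node')\le\inflation_i\gtrue_i(\node')$ for \emph{every} layer-$i$ vertex $\node'$ expanded so far, not merely those on one fixed path. The reason this works is that the argument of \thmref{thm:boundedsubopt} never uses that its path terminates at the goal: it uses only (a) consistency of $\ah$ over $\layer_i$, which makes $\gtrue_i(\cdot)+\ah(\cdot)$ non-decreasing along any $\layer_i$-optimal path, and (b) additivity of $\gtrue_i$ along such a path. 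Both properties hold just as well for an optimal $\layer_i$-path ending at $\node$.

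Concretely, I would induct on search iterations; the base case is vacuous. For the step, assume the strengthened hypothesis holds and $\graphsearch$ is about to expand a vertex. If that vertex is not in $\layer_i$ there is nothing to prove, so suppose it is $\node\in\layer_i$ and, for contradiction, that $\gest(\node)>\inflation_i\gtrue_i(\node)$, equivalently $\fest(\node)>\inflation_i(\gtrue_i(\node)+\ah(\node))$. Fix an optimal path $P=(\node^i_\start=w_1,\dots,w_\ell=\node)$ on $\layer_i$. If $\node^i_\start$ has already been expanded, let $w_j$ ($j\ge 2$) be the first vertex of $P$ not yet expanded; then $w_{j-1}$ is an expanded layer-$i$ vertex, so the hypothesis together with the edge relaxation performed at its expansion and additivity of $\gtrue_i$ along $P$ gives $\gest(w_j)\le\inflation_i\gtrue_i(w_j)$, and $w_j$ is currently on the open list, so $\fest(w_j)\le\inflation_i(\gtrue_i(w_j)+\ah(w_j))\le\inflation_i(\gtrue_i(\node)+\ah(\node))<\fest(\node)$, contradicting minimality of $\fest(\node)$. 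If $\node^i_\start$ has not been expanded, then some start-configuration vertex $\node^{i'}_\start$ with $1\le i'\le i$ sits on the open list; the zero-cost inter-layer edges force $\gest(\node^{i'}_\start)=0$, so $\fest(\node^{i'}_\start)=\inflation_{i'}\ah(\node^{i'}_\start)\le\inflation_i\ah(\node^i_\start)=\inflation_i(\gtrue_i(\node^i_\start)+\ah(\node^i_\start))\le\inflation_i(\gtrue_i(\node)+\ah(\node))<\fest(\node)$, again a contradiction. Hence the contradiction hypothesis is untenable and $\node$ satisfies $\gest(\node)\le\inflation_i\gtrue_i(\node)$ at expansion, which closes the induction. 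Equivalently one may simply observe that \thmref{thm:boundedsubopt}, applied with an optimal $\layer_i$-path to $\node$ in place of $\pathLi$, already yields the bound for $\node$, so the corollary and \thmref{thm:boundedsubopt} are really the same statement with $\pathLi$ allowed to end anywhere.

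The main thing to be careful about is the second case: one must check that the start configuration always retains some open-list representative whose layer index is at most $i$ and whose $\gest$ is exactly $0$, which holds because expanding $\node^{k}_\start$ always places $\node^{k+1}_\start$ on the open list along a zero-cost edge, so the ``frontier'' start vertex is never lost. The other, more conceptual, point is that the right-hand side is the single-layer cost $\gtrue_i$ and not the full-graph $\gtrue$: routing a path through denser layers can only decrease $\gest(\node)$, so it can never violate an \emph{upper} bound written in terms of the (generally larger) $\gtrue_i$, and the multi-layer structure therefore causes no trouble beyond this bookkeeping.
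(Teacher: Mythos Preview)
Your proposal is correct and matches the paper's own proof: the paper simply says that \thmref{thm:boundedsubopt} yields the corollary once one takes $\node^i$ as the endpoint of $\pathLi$, and handles the infeasible case by noting $\gtrue_i(\node)=\infty$. You spell out the same argument in full detail (and with a cleanly strengthened induction hypothesis), and indeed recognize in your final remark that this is equivalent to re-applying \thmref{thm:boundedsubopt} with the optimal $\layer_i$-path ending at $\node$; the only small omission is the trivial case where no such path exists, which the paper dispatches in one clause.
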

\begin{proof}
  \thmref{thm:boundedsubopt} directly proves this by setting $\node^i$ as the end of $\pathLi$.
  If there is no feasible $\pathLi$ then $\gtrue_i(\node) = \infty$ and this trivially holds.
\end{proof}

As this bound on $\gest$ holds for any node, it must hold for the goal node $\node_\goal$.
This bound on $\gest$ therefore also applies to the path returned by \astar.

\begin{cor}
  Consider an optimal path $\path^*_i$ on a $\layer_i$.
  Selective Densification over $\graph$ returns a path $\path_{SD}$ with execution cost $\executioncost(\path_{SD}) \leq \inflation_i \cdot \executioncost(\path^*_i) \forall i$ with $\inflation_i = 1 + \sdw n_i$.
\end{cor}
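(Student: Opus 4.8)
The plan is to derive this corollary almost immediately from the preceding corollary, which establishes $\gest(\node) \leq \inflation_i \gtrue_i(\node)$ for any $\node \in \layer_i$ expanded by \graphsearch. The key observation is that Selective Densification (\algoref{alg:lazysp} or \algoref{alg:bilazysp}) returns the path produced by the final call to \graphsearch, whose every edge has been collision-checked and found valid by \algoref{alg:checkedges}; hence the returned path $\path_{SD}$ is a genuine feasible path whose execution cost equals $\gest(\node_\goal)$ computed in that final \graphsearch call.

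First I would fix an arbitrary layer index $i$ and let $\path^*_i$ be an optimal path on $\layer_i$, so $\executioncost(\path^*_i) = \gtrue_i(\node^i_\goal)$. (If no feasible path exists on $\layer_i$, then $\gtrue_i(\node^i_\goal) = \infty$ and the bound is vacuous, so assume one does.) Next I would note that the goal vertices $\node^1_\goal, \dots, \node^\depthmax_\goal$ are all connected by zero-cost inter-layer edges and represent the same configuration $q_g$, so reaching any $\node^j_\goal$ is equivalent to reaching the goal; in particular $\gest(\node_\goal) \leq \gest(\node^i_\goal)$ since the search can always traverse the zero-cost vertical edges from layer $i$ to wherever the goal is recognized. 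Then applying the preceding corollary with $\node = \node^i_\goal$ gives $\gest(\node^i_\goal) \leq \inflation_i \gtrue_i(\node^i_\goal) = \inflation_i \executioncost(\path^*_i)$, and chaining these yields $\executioncost(\path_{SD}) = \gest(\node_\goal) \leq \inflation_i \executioncost(\path^*_i)$. Since $i$ was arbitrary, the bound holds for all $i$, with $\inflation_i = 1 + \sdw n_i$ by definition.

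The one subtlety I would want to handle carefully is the interaction between the lazy outer loop and the per-layer $\gest$ bound: \thmref{thm:boundedsubopt} and its corollary bound $\gest$ for nodes expanded in a single \graphsearch call, but the lazy loop invalidates edges and re-runs \graphsearch. I would argue that on the \emph{final} call — the one that returns $\path_{SD}$ — \graphsearch is run on a subgraph of $\ungraph$ with some edges removed; removing edges can only increase $\gtrue_i$ (or leave it infinite), so the inequality $\gest(\node^i_\goal) \leq \inflation_i \gtrue_i(\node^i_\goal)$ with respect to the \emph{original} $\layer_i$ still holds, because the bound is proven against the full-layer $\gtrue_i$ and edge removal only makes the right-hand side larger. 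This monotonicity observation is the main (and only) real obstacle; everything else is bookkeeping about zero-cost vertical edges and the fact that \algoref{alg:checkedges} guarantees the returned path is feasible with cost exactly $\gest(\node_\goal)$.

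\begin{proof}
  Fix any layer index $i$ and let $\path^*_i$ be an optimal path on $\layer_i$; if none exists then $\gtrue_i(\node^i_\goal) = \infty$ and the claim is vacuous, so assume $\executioncost(\path^*_i) = \gtrue_i(\node^i_\goal) < \infty$. Selective Densification returns the path $\path_{SD}$ from the final call to \graphsearch, all of whose edges have been checked valid by \algoref{alg:checkedges}; hence $\path_{SD}$ is feasible with $\executioncost(\path_{SD}) = \gest(\node_\goal)$ as computed in that call. That call runs \graphsearch on $\ungraph$ with a (possibly empty) set of edges marked invalid and removed; since removing edges cannot decrease any $\gtrue_i$, the bound from the corollary above still holds against the original $\layer_i$. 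Because the inter-layer edges joining $\node^1_\goal, \dots, \node^\depthmax_\goal$ are zero-cost and all represent $q_g$, we have $\gest(\node_\goal) \leq \gest(\node^i_\goal)$. Applying the corollary with $\node = \node^i_\goal$,
  \begin{align}
    \executioncost(\path_{SD}) = \gest(\node_\goal) \leq \gest(\node^i_\goal) \leq \inflation_i \gtrue_i(\node^i_\goal) = \inflation_i \executioncost(\path^*_i).
  \end{align}
  Since $i$ was arbitrary, this holds for all $i$, with $\inflation_i = 1 + \sdw n_i$.
\end{proof}
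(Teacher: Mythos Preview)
Your overall structure is right, but the monotonicity step you flag as ``the one subtlety'' is argued in the wrong direction. You write that removing edges can only increase $\gtrue_i$, and conclude that the bound $\gest(\node^i_\goal) \leq \inflation_i \gtrue_i(\node^i_\goal)$ ``still holds against the original $\layer_i$''. But the preceding corollary, applied to the final \graphsearch call, yields $\gest(\node^i_\goal) \leq \inflation_i \gtrue_i^{\text{modified}}(\node^i_\goal)$, where $\gtrue_i^{\text{modified}}$ is the optimal cost on layer $i$ of the graph \emph{with edges removed}. If $\gtrue_i^{\text{modified}} \geq \gtrue_i^{\text{original}}$, that inequality is \emph{weaker} than what you need, not stronger; it does not imply $\gest(\node^i_\goal) \leq \inflation_i \gtrue_i^{\text{original}}(\node^i_\goal) = \inflation_i \ec(\path^*_i)$. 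Indeed, the proof of \thmref{thm:boundedsubopt} explicitly walks along $\pathLi$ and needs its edges to be present in the graph being searched; if edges of $\path^*_i$ were removed, that argument would fail.

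The fix is the observation the paper actually makes: in LazySP, only edges that fail a collision check are marked invalid. Since $\path^*_i$ is a \emph{feasible} (collision-free) path, none of its edges are ever removed, so $\path^*_i$ remains a valid path in $\ungraph$ at every iteration, including the final one. Hence $\gtrue_i$ at the goal on the modified graph is still at most $\ec(\path^*_i)$, and the corollary gives $\gest(\node_\goal) \leq \inflation_i \ec(\path^*_i)$ directly. Replace your monotonicity sentence with this optimism/feasibility argument and the proof goes through; the rest of your bookkeeping (zero-cost vertical edges, the final path being fully validated) is fine and arguably more careful than the paper's.
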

\begin{proof}
  Selective Densification consists of a LazySP-style search (\algoref{alg:bilazysp}) using \astar.
  \graphsearch is performed over $\ungraph$, the copy of $\graph$ that is optimistic about unevaluated edges.
  Due to this optimism, $\path^*_i$ will always be valid on $\ungraph$.
  
  \graphsearch returns $\unpath$, a path over $\ungraph$ with edges that may not have been collision checked yet.
  On the final iteration Selective Densification validates all edges in $\ungraph$, returning $\path_{SD}$.

  \begin{align}
    \ec(\path_{SD}) = \ec(\unpath)
    = \gest(\node_\goal)
    \leq \inflation_i \gtrue_i(\node_\goal)
    \leq \inflation_i \ec(\path^*_i)
  \end{align}
\end{proof}

This demonstrates that adding additional layers to $\graph$ can only improve the bound of the execution cost of the path returned by $\astar$.
In particular the larger inflation factor of the dense layers does not worsen the bound from the lower inflation factor of a sparse layer.

\begin{cor}
  With $\sdw = 0$ Selective Densification returns the optimal path.
\end{cor}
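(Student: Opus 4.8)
The plan is to exploit the collapse of the heuristic when $\sdw = 0$. In that case the per-layer inflation factor $\inflation_i = 1 + \sdw\, n_i$ equals $1$ for \emph{every} layer, so $\sdh(\node) = \inflation_i \ah(\node) = \ah(\node)$ uniformly over $\graph$, independent of $\node.\depth$. Hence \graphsearch (\algoref{alg:bestfirst}) is ordinary $\astar$ driven by the Euclidean distance-to-goal, and the layered structure no longer influences the expansion order at all.

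First I would verify that $\ah$ is consistent on the \emph{entire} Layered Graph, not merely within a single layer (which was all \thmref{thm:boundedsubopt} required). For an intra-layer edge $\edge = (\node, \successor)$ the consistency inequality $\ah(\node) \le \ec(\edge) + \ah(\successor)$ is exactly the triangle inequality $\|\node.q - q_g\| \le \|\node.q - \successor.q\| + \|\successor.q - q_g\|$; for an inter-layer (vertical) edge we have $\node.q = \successor.q$ and $\ec(\edge) = 0$, so the inequality holds with equality. Thus $\ah$ is consistent and admissible over all of $\graph$, hence over $\ungraph$, which shares the same vertices and edges. Consequently each call to \graphsearch returns a least-execution-cost path between its endpoints in $\ungraph$ by the textbook optimality of $\astar$ under a consistent heuristic. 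For the bidirectional variant in \algoref{alg:bilazysp}, the backward call searches toward $\node_\start$ using distance-to-$q_s$, which is equally consistent, and reversing a least-cost path in the undirected graph $\ungraph$ yields a least-cost path in the opposite direction, so the property is preserved.

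Second I would close with the standard LazySP sandwich argument. On the iteration where \algoref{alg:checkedges} returns True, every edge of the returned path $\unpath$ has been collision-checked, so $\path_{SD} = \unpath$ is feasible on $\graph$ and $\ec(\path_{SD}) = \gest(\node_\goal)$ equals the cost of a least-cost path in $\ungraph$. Since $\ungraph$ treats every unevaluated edge as collision-free, every feasible path of $\graph$ is also a path of $\ungraph$, so the least cost over $\ungraph$ is a lower bound on the least execution cost among feasible paths of $\graph$; because $\path_{SD}$ is itself a feasible path of $\graph$ attaining that value, it is optimal. (Here ``optimal path'' means the least-execution-cost feasible path realizable on the roadmap $\graph$.)

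The main obstacle here is conceptual rather than computational: the entire preceding analysis was deliberately built to avoid assuming that $\sdh$ is consistent across the zero-cost vertical edges, so the crux is simply to notice that with $\sdw = 0$ that global consistency becomes immediate and the argument reduces to standard $\astar$ and LazySP facts. The only bookkeeping worth stating carefully is the direction reversal in the bidirectional search and the precise lower-bound/upper-bound step in the final paragraph.
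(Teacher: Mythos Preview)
Your argument is correct, but it takes a different route from the paper's. The paper's proof is a one-line invocation of the immediately preceding corollary: with $\sdw = 0$ every inflation factor collapses to $\inflation_i = 1$, so the bound $\ec(\path_{SD}) \le \inflation_i\,\ec(\path^*_i)$ already established gives $\ec(\path_{SD}) \le \ec(\path^*_i)$ for every layer $i$, and since $\path_{SD}$ is feasible this forces equality with the optimum. You instead bypass \thmref{thm:boundedsubopt} and its corollaries entirely and argue from first principles: you verify that $\ah$ is consistent across the \emph{whole} Layered Graph (checking the zero-cost vertical edges explicitly), invoke textbook $\astar$ optimality on $\ungraph$, and finish with the standard LazySP sandwich. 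The paper's approach is more economical given what has already been proven; yours is more self-contained, makes the meaning of ``optimal'' (least execution cost over feasible paths in $\graph$) explicit, and treats the bidirectional variant carefully, which the paper's one-liner leaves implicit.
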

\begin{proof}
  $\sdw = 0 \implies \inflation = 1 \implies \ec(\path) = \ec(\path^*)$
\end{proof}




Next we show that when a path exists the $\sdh$ heuristic bounds the deepest layer of a search.
We do this by showing that for a sufficiently deep layer $\sdh$ will be larger than any $\fest$ on the open list.
To make this claim, we first define clearance.
A path $\path$ has clearance $\clearance$ if and only if for every point $p$ on $\path$ the configuration-space ball of radius $\clearance$ centered at $p$ contains only valid configurations.
We assume there exists a path with clearance $\clearance$.

\begin{thm}
  \label{thm:depthbound}
  Consider a Layered Graph $\ungraph$ that contains a feasible solution $\pathLi$ with clearance $\clearance$ on $\layer_i$.
The deepest node expanded by \graphsearch is at most on $\layer_j$ for a $j$ with $\inflation_{j} \geq \inflation_i \cdot \ec(\pathLi) / \clearance $.
\end{thm}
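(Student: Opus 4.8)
The plan is to establish the depth bound by showing that every node lying on a sufficiently deep layer has an $\fest$-value strictly larger than that of a node which is guaranteed to be on the open list, so \graphsearch never pops it. Since $\fest(\node)=\gest(\node)+\sdh(\node)\ge\sdh(\node)=\inflation_{\node.\depth}\,\ah(\node)$, two ingredients suffice: (i) a uniform upper bound $B^\star$ on the minimum $\fest$-value present on the open list at every iteration before the goal is expanded, and (ii) a lower bound on $\ah(\node)$ for any node the search could conceivably expand on a deep layer. Then a node on layer $j$ with $\inflation_j\,\ah(\node)>B^\star$ is never the open-list minimum, hence never expanded, and translating $B^\star$ and the $\ah$-bound into a condition on $\inflation_j$ yields the statement.

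For (i) I would reuse the invariant established in the proof of \thmref{thm:boundedsubopt}. Because $\pathLi$ is collision-free it is never invalidated, so it remains a feasible layer-$i$ path inside $\ungraph$ and hence the optimal layer-$i$ cost satisfies $\gtrue_i(\node^i_\goal)\le\ec(\pathLi)$. Cases~1 and~2 of that proof show that, before the goal is expanded, the open list always contains either a node $\goodnode$ on the optimal layer-$i$ path with $\gest(\goodnode)\le\inflation_i\,\gtrue_i(\goodnode)$, or some start-copy $\node^{i'}_\start$ with $i'\le i$. In the first case, chaining the consistency inequalities \eref{eq:subopt:consistency}--\eref{eq:subopt:gval} (whose far end is $\gtrue_i(\node^i_\goal)$ since $\ah(\node^i_\goal)=0$) gives $\fest(\goodnode)=\gest(\goodnode)+\inflation_i\,\ah(\goodnode)\le\inflation_i\bigl(\gtrue_i(\goodnode)+\ah(\goodnode)\bigr)\le\inflation_i\,\gtrue_i(\node^i_\goal)\le\inflation_i\,\ec(\pathLi)$. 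In the second case $\fest(\node^{i'}_\start)=\inflation_{i'}\,\ah(\node_\start)\le\inflation_i\,\|q_s-q_g\|\le\inflation_i\,\ec(\pathLi)$ by \eref{eq:ah} and the triangle inequality. Either way $B^\star=\inflation_i\,\ec(\pathLi)$ works, so every node \graphsearch expands has $\fest\le\inflation_i\,\ec(\pathLi)$.

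For (ii), an expanded node $\node$ on layer $j$ obeys $\inflation_j\,\ah(\node)=\sdh(\node)\le\fest(\node)\le\inflation_i\,\ec(\pathLi)$, i.e. $\|\node.\config-q_g\|=\ah(\node)\le\inflation_i\,\ec(\pathLi)/\inflation_j$. Hence if $\inflation_j>\inflation_i\,\ec(\pathLi)/\clearance$ then $\|\node.\config-q_g\|<\clearance$; since $q_g$ is an endpoint of $\pathLi$ and $\pathLi$ has clearance $\clearance$, the radius-$\clearance$ ball about $q_g$ lies entirely in $\cspace_\free$, so $\node.\config$ and the whole straight segment to $q_g$ are collision-free, i.e. $\node$ sits inside the free ``goal ball''. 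The remaining step is to argue that expanding such a node does not push the search deeper than necessary: $\node$ is joined to a goal-copy by a valid edge that stays valid in $\ungraph$, so a goal node is on the open list with $\fest\le\fest(\node)$ and is popped no later than $\node$, hence the search terminates without expanding anything strictly deeper. Consequently the deepest layer carrying a genuinely interior expanded node, $j$, satisfies $\inflation_j\le\inflation_i\,\ec(\pathLi)/\clearance$, and choosing the first layer with $\inflation_j\ge\inflation_i\,\ec(\pathLi)/\clearance$ gives ``at most on $\layer_j$''.

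The main obstacle is precisely this last part of (ii): nodes within distance $\clearance$ of $q_g$ on very deep layers. Handling them rigorously needs either a mild assumption that the per-layer connection radii never shrink below $\clearance$ (so each such node is genuinely adjacent to a goal-copy, the connecting edge being collision-free because it lies in the free goal ball), or an argument that any path reaching such a deep node must first pass through a deep node far from $q_g$ --- which ingredient (ii) has already excluded. A secondary technical check is that the \thmref{thm:boundedsubopt} invariant remains valid inside a \LazySP iteration in which some edges of $\ungraph$ have been marked invalid; this holds because the edges of $\pathLi$ are collision-free and are therefore never invalidated, keeping $\gtrue_i(\node^i_\goal)\le\ec(\pathLi)$.
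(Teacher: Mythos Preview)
Your proposal is correct and follows essentially the same route as the paper: both arguments combine (a) the invariant from \thmref{thm:boundedsubopt} that the open list always contains a node with $\fest\le\inflation_i\,\ec(\pathLi)$, with (b) a case split on whether the candidate deep node lies within distance~$\clearance$ of the goal, using $\sdh\ge\inflation_j\clearance$ in the far case and a direct goal-edge in the near case. The paper phrases the near-goal case slightly differently---it looks at the \emph{first} node expanded on $\layer_{j+1}$ and observes its predecessor must be the same configuration on $\layer_j$---but the content is identical, and in fact you are more explicit than the paper about the hidden assumption $\clearance<r_j$ that makes the goal-edge exist.
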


\begin{proof}
  From \thmref{thm:boundedsubopt}
  \begin{align}
    \inflation_i \cdot \ec(\pathLi) \geq \inflation_i \ftrue_i(\node) \geq \fest(\node) \forall \node \in \pathLi \label{eq:worstf}
  \end{align}
  Define $\node_k$ as the first node expanded on layer $\layer_{j+1}$.

  Consider two cases:

  If $||\node_k - \node_\goal|| \geq \clearance$ then \
  \begin{align}
    \fest(\node_k) &\geq \sdh(\node_k)\\
    & \geq \inflation_{j+1} ||\node_k - \node_\goal||\\
    & \geq \inflation_{j+1} \clearance\\
    & > \inflation_{j} \clearance \label{eq:deltabound}
  \end{align}
  \bsrevB{
    Consider a layer $j$ sufficiently deeper than layer $i$ such that $\inflation_{j} \geq \inflation_i \cdot \ec(\pathLi) / \clearance $.
  Then Eq. \eref{eq:deltabound} yields $\fest(\node_k) \geq \inflation_i \cdot \ec(\pathLi)$.
  Thus by Eq. \eref{eq:worstf} all nodes in $\pathLi$ would be expanded before any node on layer $\layer_{j+1}$.
  }

  Otherwise consider $||\node_k - \node_g|| < \clearance$.
  \bsrevB{Since $\node_k$ is the first node expanded on layer $\layer_{j+1}$ the predecessor to $\node_k$ therefore must represent the same configuration on layer $\layer_{j}$.
    However, with $\clearance < r_j$ then $\node_\goal$ is on the open list and \bsrevC{will be expanded} before $\node_k$.
    Since $\node_\goal$ has clearance $\clearance$ the edge would be valid.
    Selective Densification would then terminate without ever expanding $\node_k$.
    }
\end{proof}

\thmref{thm:depthbound} provides insight into potential traps for Selective Densification, as although the number of layers searched is bounded, this bound may be large.
In practice this can occur when nodes are expanded in a cul-de-sac close to the goal, causing the heuristic to be a small fraction of $\fest$, and therefore misleading Selective Densification to explore on a dense layer.
In such a scenario $\timeh$ vastly underestimates the remaining planning time of dense layers.
\figref{fig:culdesac} illustrates the behavior in such an environment (using unidirectional search).

\section{Experiments}
\label{sec:experiments}


\begin{figure*}
  \centering
    \begin{subfigure}[b]{0.32\linewidth}
        \centering
        \textbf{Table}
    \end{subfigure}
    \hfill
    \begin{subfigure}[b]{0.32\linewidth}
        \centering
        \textbf{Bookshelf}
    \end{subfigure}
    \hfill
    \begin{subfigure}[b]{0.32\linewidth}
        \centering
        \textbf{Slot}
    \end{subfigure}
    \hfill
    \begin{subfigure}[b]{0.25\linewidth}
        \centering
        \includegraphics[width=\textwidth, trim={0cm 0 0cm 0},clip]{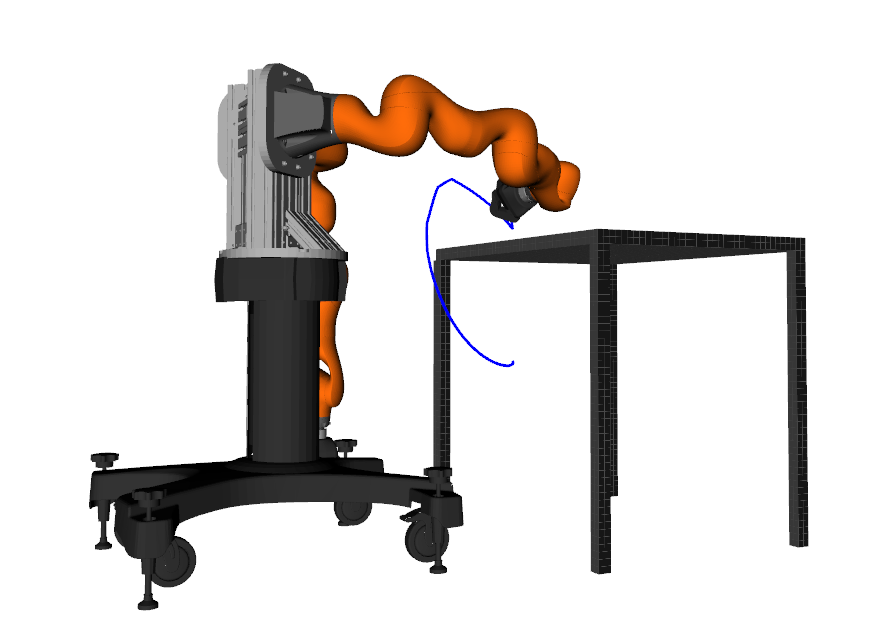}
    \end{subfigure}
    \hfill
    \begin{subfigure}[b]{0.25\linewidth}
        \centering
        \includegraphics[width=\textwidth, trim={0cm 7cm 0cm 0},clip]{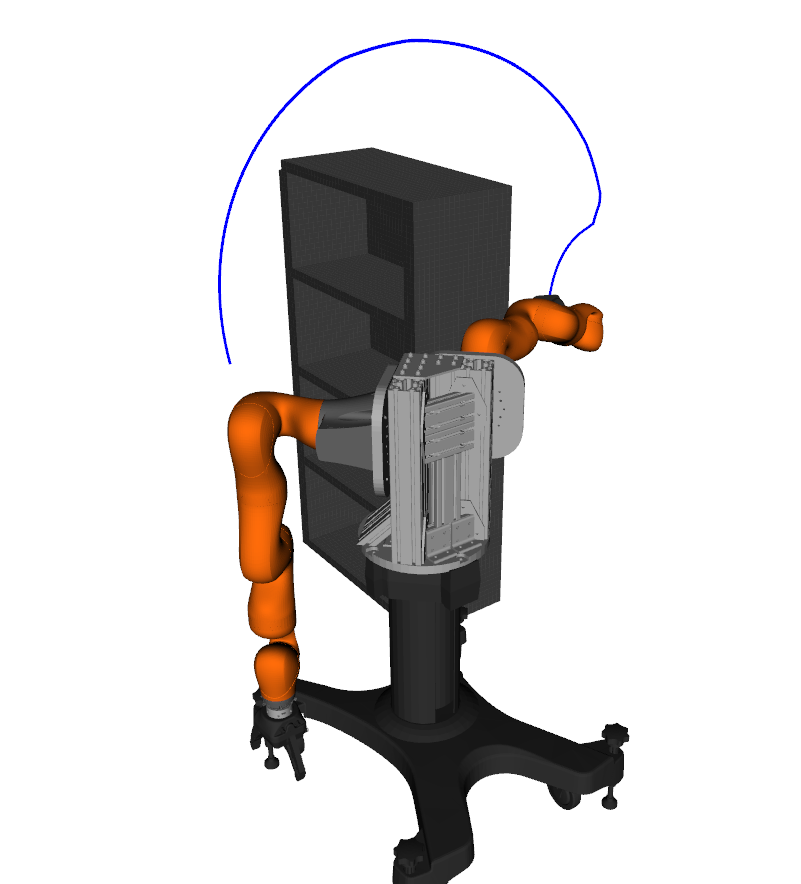}
    \end{subfigure}
    \hfill
    \begin{subfigure}[b]{0.25\linewidth}
        \centering
        \includegraphics[width=\textwidth, trim={0cm 0cm 0cm 0},clip]{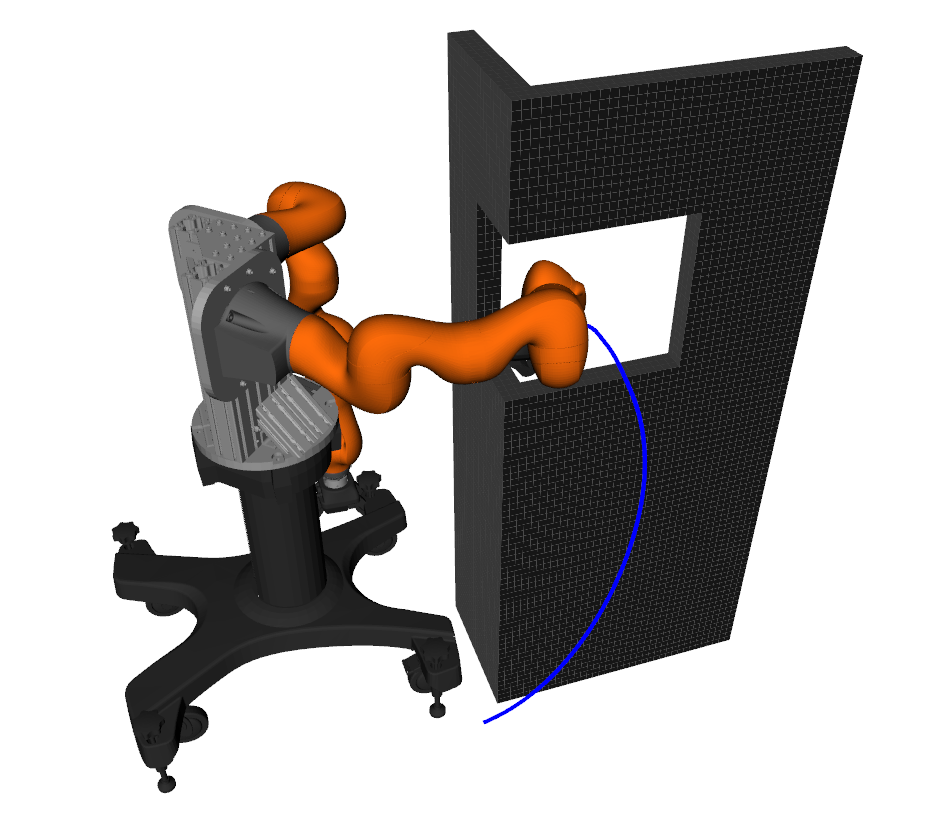}
    \end{subfigure}
    \hfill


    \begin{subfigure}[b]{0.03\linewidth}
      \centering
        \includegraphics[width=\textwidth, trim={0cm 2cm 18.9cm 2cm},clip]{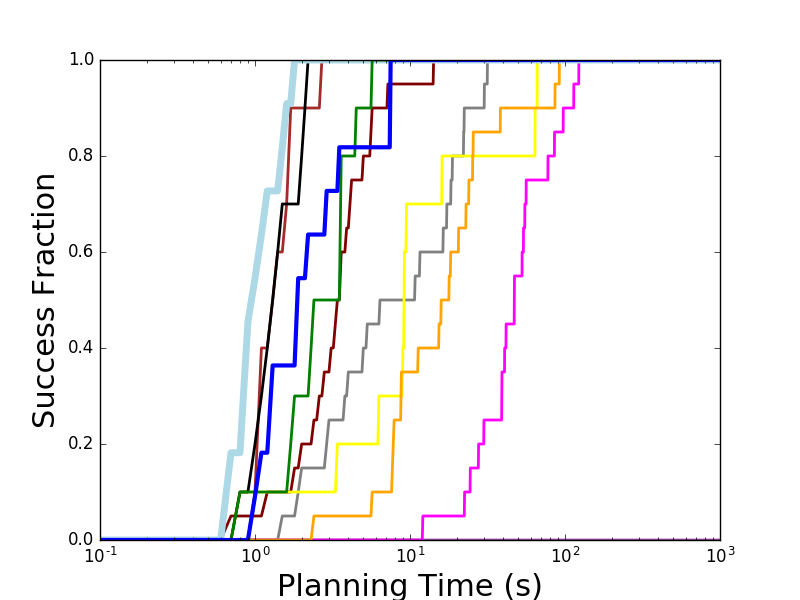}
    \end{subfigure}
    \begin{subfigure}[b]{0.28\linewidth}
        \centering
        \includegraphics[width=\textwidth, trim={1.5cm 0 1.7cm 0cm},
          clip]{img/Table_success_rate.png}
    \end{subfigure}
    \hfill
    \begin{subfigure}[b]{0.28\linewidth}
        \centering
        \includegraphics[width=\textwidth, trim={1.5cm 0 1.7cm 0},clip]{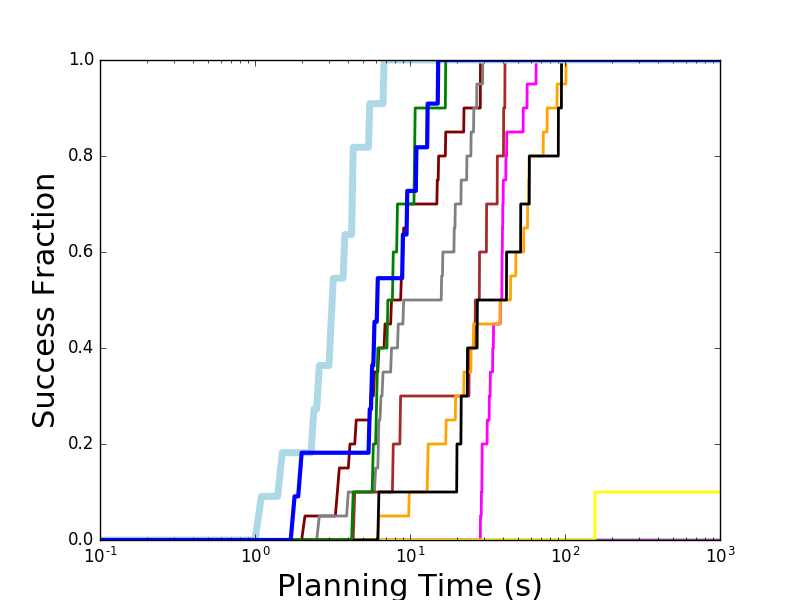}
    \end{subfigure}
    \hfill
    \begin{subfigure}[b]{0.28\linewidth}
        \centering
        \includegraphics[width=\textwidth, trim={1.5cm 0 1.7cm 0},clip]{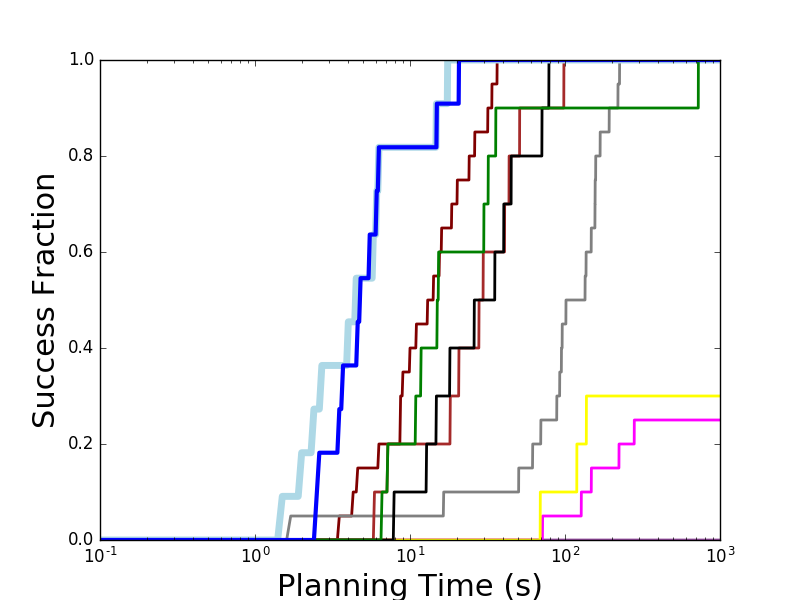}
    \end{subfigure}
    \hfill
    \begin{subfigure}[b]{0.1\linewidth}
        \centering
        \includegraphics[width=\textwidth, trim={16cm 6cm 0cm 0},clip]{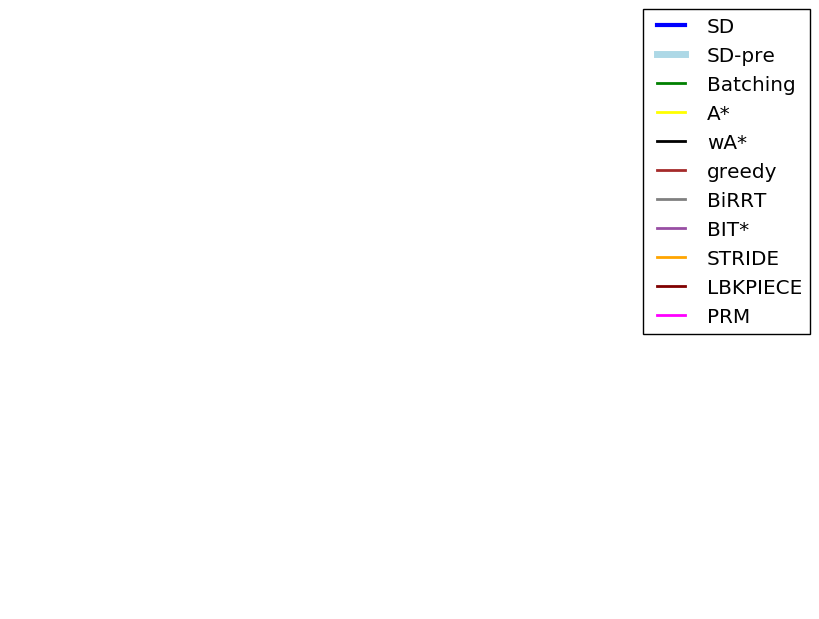}
    \end{subfigure}
    \hfill
    
    \begin{subfigure}[b]{0.03\linewidth}
        \centering
        \includegraphics[width=\textwidth, trim={0cm 2cm 18.8cm 2cm},clip]{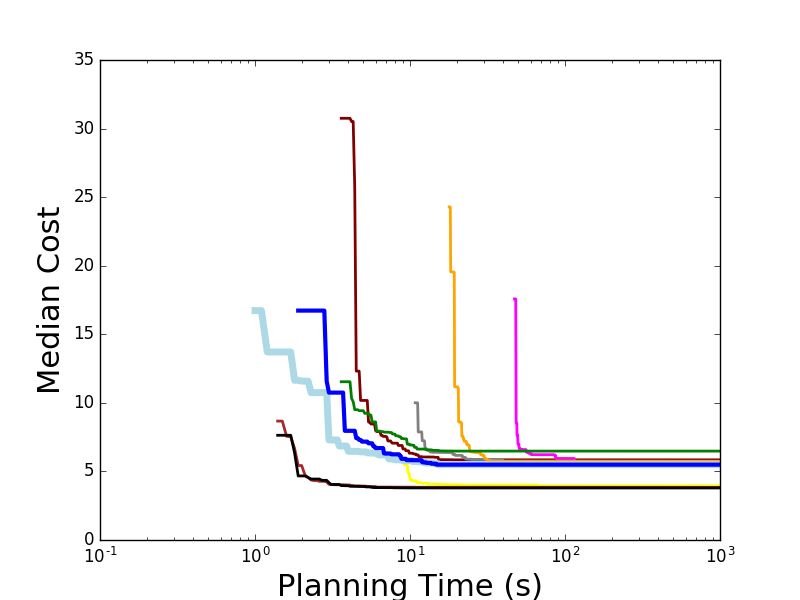}
    \end{subfigure}
    \begin{subfigure}[b]{0.28\linewidth}
        \centering
        \includegraphics[width=\textwidth, trim={1.5cm 0 1.7cm 0cm},clip]{img/Table_medians.png}
    \end{subfigure}
    \hfill
    \begin{subfigure}[b]{0.28\linewidth}
        \centering
        \includegraphics[width=\textwidth, trim={1.5cm 0 1.7cm 0},clip]{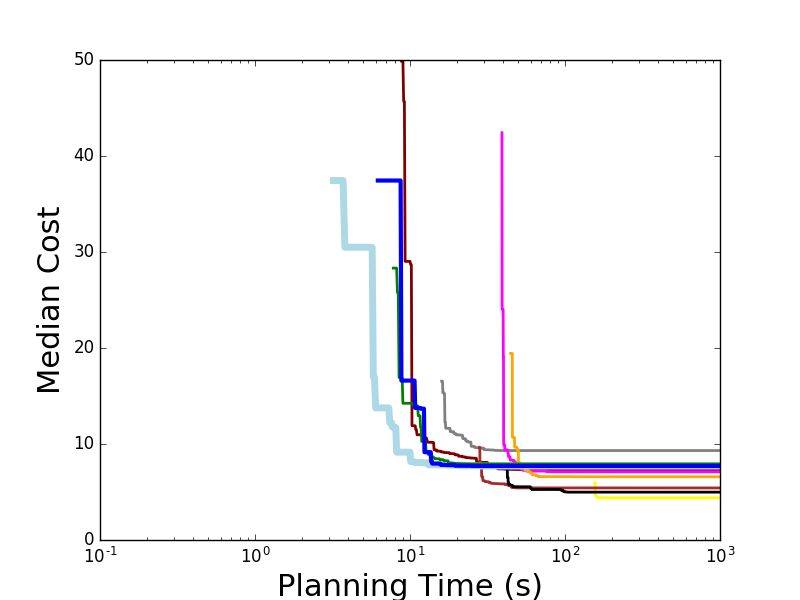}
    \end{subfigure}
    \hfill
    \begin{subfigure}[b]{0.28\linewidth}
        \centering
        \includegraphics[width=\textwidth, trim={1.6cm 0 1.7cm 0},clip]{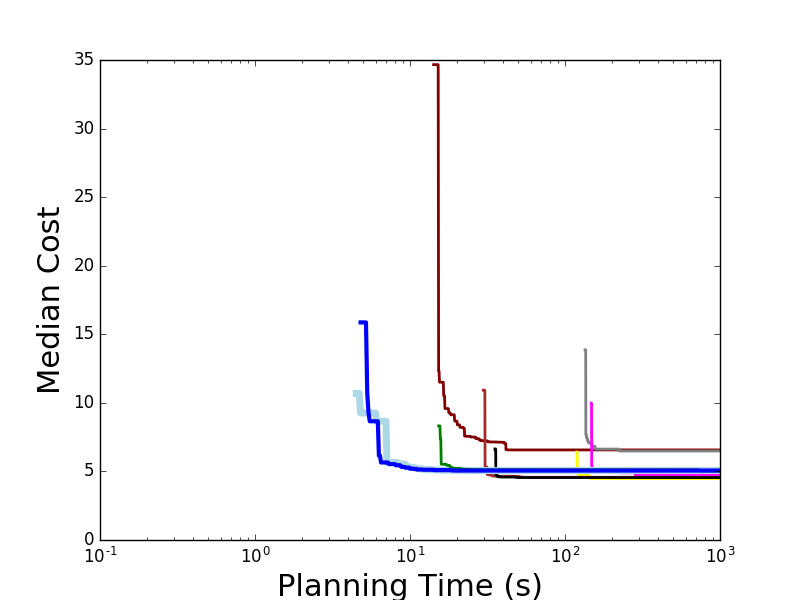}
    \end{subfigure}
    \hfill
    \begin{subfigure}[b]{0.1\linewidth}
        \centering
        \includegraphics[width=\textwidth, trim={16cm 6cm 0cm 0},clip]{img/Table_legend.png}
    \end{subfigure}
    \hfill

    \caption{Planning Comparisons. Top: The robot at the goal with a blue line tracing the end effector path from the start. 
      Middle: Fraction of trials that have achieved any solution, as a function of time.
      Bottom: Median length of path found.
    }

    \label{fig:Experiments}
\end{figure*}

Experiments were performed comparing Selective Densification with many methods of searching the same graph, and also with \bsrevC{numerous algorithms from OMPL} \cite{ompl} on three scenarios for a 7-DOF robotics arm.
\bsrevC{Each strategy was tested 20 times in each scenario with the same start and goal configurations using different random seeds creating different graphs.}
\bsrevC{We considered the cost of a path to be its length in C-space.}

We implemented Selective Densification in C++.
Robot-environment collision checking used GPU-Voxels \cite{Hermann2014} with a voxel grid of 200x200x200 and a voxel size of 2cm$^3$.
Collision checks were performed by intersecting the voxel grid representing the robot with the voxel grid representing the environment obstacles.
A NVidia 1080Ti GPU and an Intel i7-7700K CPU was used for all experiments, and a single collision check took approximately 1ms.

\bsrevC{Each} Layered Graph was constructed explicitly, using $Q$ as a 7D halton sequence \bsrevC{with a pseudo-random offset}.
Layer $i$ contained $n_i = 2^i$ nodes, with connection radius $r_i$ set so the expected number of edges per node was 30, consistent with previous halton roadmap experiments \cite{LazySP}.
\bsrevC{We used 18 layers as empirically this fit comfortably in memory and was able to solve our scenarios}.
Each graph was therefore constructed with 524287 nodes and $\sim$7 million edges, \bsrevC{taking $\sim$2 minutes}.
\bsrevC{Collision checking was performed by discretely checking states along an edge at most 0.02 radians apart.}




We examine Selective Densification with and without precomputation, where the swept volume of relevant edges has been precomputed (SD-pre).
Note that unlike the more common reuse of PRMs by storing edge validity for particular obstacles, this method is agnostic to changes in the environment, and only requires that the robot remain the same.
\bsrevC{
During online planning a collision check can be performed quickly via a lookup of the swept volume followed by an intersection with the environment.}
For the results shown we precomputed all edges that were checked, except from $\node_\start$ and $\node_\goal$, as these were not added until query time.
Checking a precomputed edge took $\sim$1ms, while without precomputation checking took between $\sim$1ms (the first configuration checked was invalid) and $\sim$150ms.
Note that it is infeasible to store the swept volumes for all 7 million $\edges \in \graph$.
\bsrevC{In these experiments we stored all edges checked by all previous planning runs} thus this provides an optimistic assessment of the practical improvement that precomputed swept volumes can offer.

For each scenario we compared Selective Densification to many common methods of graph search used in robotics.
\bsrevC{In all cases we use lazy collision checking, as direct collision checking takes excessively long}
We attempted search using $\astar$ with \bsrevC{the admissible heuristic of C-space distance from Equation \eqref{eq:ah}}, as well as the inflated, and greedy heuristic.
We compare against Iterative Deepening (ID), or batching, where a search continues on a single layer of $\graph$ until that layer is shown to have no solution.
We also compare against \bsrevC{bidirectional RRT-connect, PRM, SPARS, STRIDE, BIT*, and LBKPIECE from OMPL, although all SPARS attempts exceeded our 5 minute limit so results are not presented}.
These approaches have many variants able to improve solutions over time (\bsrevC{e.g.} ANA*, RRT*, etc.), but we present results using shortcut smoothing as we found it vastly outperformed other methods on these problems.
Since shortcut smoothing no longer constrains the path to $\graph$, methods can outperform $\astar$.
\bsrevC{Results are shown in \figref{fig:Experiments}. }
\bsrevC{The deepest layer on which a collision check was performed using SD was 8 (Table), 9 (Bookshelf), and 15 (Slot).}

\bsrevC{
  We find our proposed method (SD and SD-pre) finds a feasible solution to our scenarios in a time comparable to (Table) or faster than (Slot) all other algorithms tested.
  The path length of the initial solution found by SD (and other algorithms) was substantially suboptimal, however a few seconds of shortcut smoothing dramatically reduced the path length in all trials for all methods.
  This suggests that in practice although SD finds paths far better than the bound provided in \thmref{thm:boundedsubopt}, the solution found is far from optimal and will be sensitive to the quality of the smoother.
}

\bsrevC{In \figref{fig:Experiments} we set $\sdw$ (which governs the inflation of each layer of SD) to 1.0}, causing a near-greedy search on deeper layers.
We performed a sweep over $\sdw$ for a single $\graph$, with results for the Table Scenario shown in \figref{fig:sweep}.
As expected, we found that smaller $\sdw$ result in slower search with shorter paths after the graph search.

We find that the largest component of planning time is the collision checking of edges, as expected.
However, we find repeated $\astar$ search time is significant. 
As this LazySP approach repeatedly searches similar graphs, we attempted to reuse information from previous $\astar$ iterations using generalized LPA$^*$ \cite{GLPA} but found this approach slower.

Overall we observed precomputing the swept volume of edges significantly reduces collision checking time of valid edges, however only a modest \bsrevA{decrease}
is seen when checking invalid edges, as we observe in our Scenarios that edges are invalidated typically after checking only a few configurations.
Overall we observed precomputation yields a modest improvement in overall performance.

  Finally, we compare a baseline bidirectional LazySP search to our proposal of balancing the time searching each direction (\algoref{alg:bilazysp}).
In the Table and Bookshelf Scenarios both approaches perform similarly.
In the Slot Scenario the forward search expands far more nodes and edges than the reverse search.
By balancing search time our proposed approach performs fewer forward search iterations before finding a solution using the reverse search (\tabref{tab:bi}).

\begin{figure}
    \centering
    \begin{subfigure}[b]{0.75\linewidth}
        \centering
        \includegraphics[width=\textwidth]{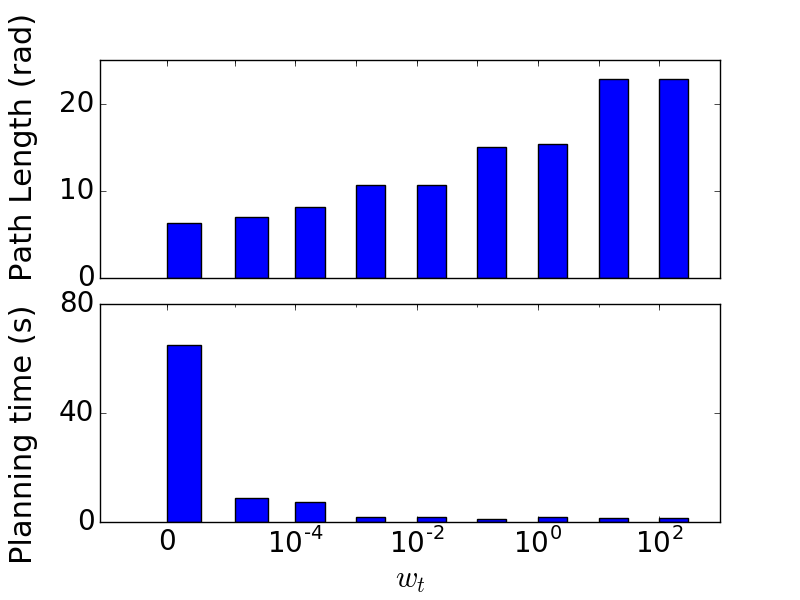}
    \end{subfigure}
    \caption{Comparing Path Length (before smoothing) and planning time on the Table Scenario using Selective Densification with varying $\sdw$.
    }
    \label{fig:sweep}
\end{figure}

\begin{table}
\begin{tabular}{lllll}
 & Total & Edge Check & Forward & Reverse \\
Proposed & 6.5 $\pm$ 5.7 & 3.1 $\pm$ 2.1 & 1.8 $\pm$ 2.0 & 1.4 $\pm$ 1.5\\
Baseline & 73.4 $\pm$ 79.3 & 9.3 $\pm$ 4.4 & 63.0 $\pm$ 75.1 & 0.9 $\pm$ 0.8\\
\end{tabular}
\caption{Planning times in seconds for bidirectional search comparing the Proposed (\algoref{alg:bilazysp}) and Baseline (alternating each iteration) using SD on the Slot Scenario
\label{tab:bi}}
\end{table}


\section{Conclusions and Future Work}

We \bsrevC{proposed and} implemented Selective Densification, a motion planning method that searches a graph composed of layers of different densities of nodes.
SD prioritizes search that is close to the goal and on sparse layers through the planning cost-to-go heuristic $\sdh$.
We presented proofs of path quality and limited search depth and performed planning experiments for a robotic arm demonstrating a speed up when planning in environments requiring dense graphs.


Unexplored in this work is the best method to set $\sdw$, the weighting parameter for $\sdh$.
We showed a larger $\sdw$ tended to lower planning times but increase execution cost of the path found by the graph search.
However, post-processing via shortcut smoothing drastically reduced the path cost suggesting only a marginal benefit might be gained from setting a low $\sdw$.



\bibliographystyle{IEEEtran}
\bibliography{references}

\end{document}